\icmltitlerunning{Hyperbolic Entailment Cones}
\begin{document}

\twocolumn[
\icmltitle{Hyperbolic Entailment Cones for Learning Hierarchical Embeddings}




\begin{icmlauthorlist}
\icmlauthor{Octavian-Eugen Ganea}{to}
\icmlauthor{Gary B{\'e}cigneul}{to}
\icmlauthor{Thomas Hofmann}{to}

\end{icmlauthorlist}

\icmlaffiliation{to}{Department of Computer Science, ETH Zurich, Switzerland}

\icmlcorrespondingauthor{Octavian-Eugen Ganea}{octavian.ganea@inf.ethz.ch}
\icmlcorrespondingauthor{Gary B{\'e}cigneul}{gary.becigneul@inf.ethz.ch}
\icmlcorrespondingauthor{Thomas Hofmann}{thomas.hofmann@inf.ethz.ch}

\icmlkeywords{differential geometry, hyperbolic spaces, graph embeddings, hierarchical, representation, tree, poincare, entailment}

\vskip 0.3in
]



\printAffiliationsAndNotice{}  

\newcommand{\mat}[1]{{\mathbf #1}}
\newcommand{\x}{{\mathbf x}}
\newcommand{\I}{{\mathbf I}}
\newcommand{\hyperspace}{{\mathcal X}}
\newcommand{\D}{{\mathbb D}}
\newcommand{\HH}{{\mathbb H}}
\newcommand{\E}{{\mathbb E}}
\newcommand{\M}{{\mathbf M}}
\newcommand{\R}{{\mathbb R}}
\newcommand{\C}{{\mathbb C}}
\newcommand{\z}{{\mathbf z}}
\renewcommand{\Re}{{\mathbb R}}
\newcommand*{\QEDB}{\hfill\ensuremath{\square}}%
\newcommand{\hi}[1]{\textcolor{red}{#1}}

\newtheorem{theorem}{Theorem}
\newtheorem{corollary}{Corollary}[theorem]
\newtheorem{lemma}[theorem]{Lemma}
 
\DeclarePairedDelimiterX{\inp}[2]{\langle}{\rangle}{#1,#2}

\begin{abstract}
Learning graph representations via low-dimensional embeddings that preserve relevant network properties is an important class of problems in machine learning. We here present a novel method to embed directed acyclic graphs. Following prior work, we first advocate for using hyperbolic spaces which provably model tree-like structures better than Euclidean geometry. Second, we view hierarchical relations as partial orders defined using a family of nested geodesically convex cones. We prove that these entailment cones admit an optimal shape with a closed form expression both in the Euclidean and hyperbolic spaces, and they canonically define the embedding learning process. Experiments show significant improvements of our method over strong recent baselines both in terms of representational capacity and generalization.

\end{abstract}

\section{Introduction}

Producing high quality feature representations of data such as text or images is a central point of interest in artificial intelligence. A large line of research focuses on embedding discrete data such as graphs~\citep{grover2016node2vec,goyal2017graph} or linguistic instances \citep{mikolov2013distributed,pennington2014glove,kiros2015skip} into continuous spaces that exhibit certain desirable geometric properties. This class of models has reached state-of-the-art results for various tasks and applications, such as link prediction in knowledge bases~\citep{nickel2011three,bordes2013translating} or in social networks~\citep{hoff2002latent}, text disambiguation~\citep{ganea2017deep}, word hypernymy~\citep{shwartz2016improving}, textual entailment~\citep{rocktaschel2015reasoning} or taxonomy induction~\citep{fu2014learning}. 

Popular methods typically embed symbolic objects in low dimensional Euclidean vector spaces using a strategy that aims to capture semantic information such as functional similarity. Symmetric distance functions are usually minimized between representations of correlated items during the learning process. Popular examples are word embedding algorithms trained on corpora co-occurrence statistics which have shown to strongly relate semantically close words and their topics~\citep{mikolov2013distributed,pennington2014glove}.

However, in many fields (e.g. Recommender Systems, Genomics~\citep{billera2001geometry}, Social Networks), one has to deal with data whose latent anatomy is best defined by non-Euclidean spaces such as Riemannian manifolds~\citep{bronstein2017geometric}. Here, the Euclidean symmetric models suffer from not properly reflecting complex data patterns such as the latent hierarchical structure inherent in taxonomic data. To address this issue, the emerging trend of geometric deep learning\footnote{http://geometricdeeplearning.com/} is concerned with non-Euclidean manifold representation learning.

%
%
%
%
%
%

In this work, we are interested in geometrical modeling of hierarchical structures, directed acyclic graphs (DAGs) and entailment relations via low dimensional embeddings. Starting from the same motivation, the order embeddings method \citep{vendrov2015order} explicitly models the partial order induced by entailment relations between embedded objects. Formally, a vector $x\in\Re^n$ represents a more general concept than any other embedding from the Euclidean entailment region $\mathcal{O}_x:=\lbrace y\mid y_i\geq x_i, \forall 1 \leq i \leq n\rbrace$. A first concern is that the capacity of order embeddings grows only linearly with the embedding space dimension. Moreover, the regions $\mathcal{O}_x$ suffer from heavy intersections, implying that their disjoint volumes rapidly become bounded\footnote{For example, in $n$ dimensions, no $n+1$ distinct regions $\mathcal{O}_x$ can simultaneously have unbounded disjoint sub-volumes.}. As a consequence, representing wide (with high branching factor) and deep hierarchical structures in a bounded region of the Euclidean space would cause many points to end up undesirably close to each other. This also implies that Euclidean distances would no longer be capable of reflecting the original  tree metric.

Fortunately, the hyperbolic space does not suffer from the aforementioned capacity problem because the volume of any ball grows exponentially with its radius, instead of polynomially as in the Euclidean space. This exponential growth property enables hyperbolic spaces to embed any weighted tree while almost preserving their metric\footnote{See end of Section~\ref{ssec:hyperbolic} for a rigorous formulation.} \citep{gromov1987hyperbolic,bowditch2006course,sarkar2011low}. The tree-likeness of hyperbolic spaces has been extensively studied \citep{hamann_2017}. Moreover, hyperbolic spaces are used to visualize large hierarchies~\citep{lamping1995focus+}, to efficiently forward information in complex networks~\citep{krioukov2009greedy,cvetkovski2009hyperbolic} or to embed heterogeneous, scale-free graphs \citep{shavitt2008hyperbolic,krioukov2010hyperbolic,blasius2016efficient}. 


From a machine learning perspective, recently, hyperbolic spaces have been observed to provide powerful representations of entailment relations  \citep{nickel2017poincar}. The latent hierarchical structure surprisingly emerges as a simple reflection of the space's negative curvature.
However, the approach of \citep{nickel2017poincar} suffers from a few drawbacks: first, their loss function causes most points to collapse on the border of the Poincar\'e ball, as exemplified in Figure~\ref{fig:mammals}. Second, the hyperbolic distance alone (being symmetric) is not capable of encoding asymmetric relations needed for entailment detection, thus a heuristic score is chosen to account for concept generality or specificity encoded in the embedding norm. 

We here inspire ourselves from hyperbolic embeddings \citep{nickel2017poincar} and order embeddings \citep{vendrov2015order}. Our contributions are as follows:
\begin{itemize}
\item We address the aforementioned issues of~\citep{nickel2017poincar} and~\citep{vendrov2015order}. We propose to replace the entailment regions $\mathcal{O}_x$ of order-embeddings by a more efficient and generic class of objects, namely \textit{geodesically convex entailment cones}. These cones are defined on a large class of Riemannian manifolds and induce a partial ordering relation in the embedding space. 
\item The optimal entailment cones satisfying four natural properties surprisingly exhibit canonical closed-form expressions in both Euclidean and hyperbolic geometry that we rigorously derive.
\item An efficient algorithm for learning hierarchical embeddings of directed acyclic graphs is presented. This learning process is driven by our entailment cones. 
\item Experimentally, we learn high quality embeddings and improve over experimental results in \citep{nickel2017poincar} and \citep{vendrov2015order} on hypernymy link prediction for word embeddings, both in terms of capacity of the model and generalization performance. 
\end{itemize}

We also compute an analytic closed-form expression for the exponential map in the $n$-dimensional Poincar\'e ball, allowing us to perform full Riemannian optimization \citep{bonnabel2013stochastic} in the Poincar\'e ball, as opposed to the approximate optimization method used by~\citep{nickel2017poincar}.


\section{Mathematical preliminaries}\label{sec:math}
We now briefly visit some key concepts needed in our work. 
\paragraph{Notations.} We always use $\|\cdot \|$ to denote the Euclidean norm of a point (in both hyperbolic or Euclidean spaces). We also use $\inp{\cdot}{\cdot}$ to denote the Euclidean scalar product.

\subsection{Differential geometry}
For a rigorous reasoning about hyperbolic spaces, one needs to use concepts in differential geometry, some of which we highlight here. For an in-depth introduction, we refer the reader to \citep{spivak1979comprehensive} and \citep{hopper2010ricci}. 

\paragraph{Manifold.} A \textit{manifold} $\mathcal{M}$ of dimension $n$ is a set that can be locally approximated by the Euclidean space $\mathbb{R}^n$. For instance, the sphere $\mathbb{S}^2$ and the torus $\mathbb{T}^2$ embedded in $\mathbb{R}^3$ are $2$-dimensional manifolds, also called surfaces, as they can locally be approximated by $\mathbb{R}^2$. The notion of manifold is a generalization of the notion of surface.

\paragraph{Tangent space.} For $x\in\mathcal{M}$, the \textit{tangent space} $T_x\mathcal{M}$ of $\mathcal{M}$ at $x$ is defined as the $n$-dimensional vector-space approximating $\mathcal{M}$ around $x$ at a first order. It can be defined as  the set of vectors $v$ that can be obtained as $v:=c'(0)$, where $c:(-\varepsilon,\varepsilon)\to\mathcal{M}$ is a smooth path in $\mathcal{M}$ such that $c(0)=x$.

\paragraph{Riemannian metric.} A \textit{Riemannian metric} $g$ on $\mathcal{M}$ is a collection $(g_x)_x$ of inner-products $g_x:T_x\mathcal{M}\times T_x\mathcal{M}\to\mathbb{R}$ on each tangent space $T_x\mathcal{M}$, depending smoothly on $x$. Although it defines the geometry of $\mathcal{M}$ locally, it induces a global distance function $d:\mathcal{M}\times\mathcal{M}\to\mathbb{R}_+$ by setting $d(x,y)$ to be the infimum of all lengths of smooth curves joining $x$ to $y$ in $\mathcal{M}$, where the length $\ell$ of a curve $\gamma:[0,1]\to\mathcal{M}$ is defined as 
\begin{equation}
\ell(\gamma)=\int_{0}^1\sqrt{g_{\gamma(t)}(\gamma'(t),\gamma'(t))}dt.
\end{equation}

\paragraph{Riemannian manifold.} A smooth manifold equipped with a Riemannian metric is called a \textit{Riemannian manifold}. Subsequently, due to their metric properties, we will only consider such manifolds.

\paragraph{Geodesics.} A \textit{geodesic} (straight line) between two points $x,y\in\mathcal{M}$ is a smooth curve of minimal length joining $x$ to $y$ in $\mathcal{M}$. Geodesics define shortest paths on the manifold. They are a generalization of lines in the Euclidean space. 

\paragraph{Exponential map.} The \textit{exponential map} $\exp_x: T_x\mathcal{M}\to\mathcal{M}$ around $x$, when well-defined, maps a small perturbation of $x$ by a vector $v\in T_x\mathcal{M}$ to a point $\exp_x(v)\in\mathcal{M}$, such that $t\in[0,1]\mapsto\exp_x(tv)$ is a geodesic joining $x$ to $\exp_x(v)$. In Euclidean space, we simply have $\exp_x(v)=x+v$. The exponential map is important, for instance, when performing gradient-descent over parameters lying in a manifold~\citep{bonnabel2013stochastic}.

\paragraph{Conformality.} A metric $\tilde{g}$ on $\mathcal{M}$ is said to be \textit{conformal} to $g$ if it defines the same angles, \textit{i.e.} for all $x\in\mathcal{M}$ and $u,v\in T_x\mathcal{M}\setminus\lbrace 0\rbrace$, 
\begin{equation}
\frac{\tilde{g}_x(u,v)}{\sqrt{\tilde{g}_x(u,u)}\sqrt{\tilde{g}_x(v,v)}}=\frac{g_x(u,v)}{\sqrt{g_x(u,u)}\sqrt{g_x(v,v)}}.
\end{equation}
This is equivalent to the existence of a smooth function $\lambda:\mathcal{M}\to(0,\infty)$ such that $\tilde{g}_x=\lambda_x^2 g_x$, which is called the \textit{conformal factor} of $\tilde{g}$ (w.r.t. $g$).

\subsection{Hyperbolic geometry}\label{ssec:hyperbolic}

The hyperbolic space of dimension $n \geq 2$ is a fundamental object in Riemannian geometry. It is (up to isometry) uniquely characterized as a complete, simply connected Riemannian manifold with constant negative curvature \citep{cannon1997hyperbolic}. The other two model spaces of constant sectional curvature are the flat Euclidean space $\Re^n$ (zero curvature) and the hyper-sphere $\mathbb{S}^n$ (positive curvature).

The hyperbolic space has five models which are often insightful to work in. They are isometric to each other and conformal to the Euclidean space \citep{cannon1997hyperbolic,parkkonenhyperbolic}\footnote{\url{https://en.wikipedia.org/wiki/Hyperbolic_space}}. We prefer to work in the Poincar\'e ball model $\D^n$ for the same reasons as \citep{nickel2017poincar} and, additionally, because we can derive a closed form expression of geodesics and exponential map.
\paragraph{Poincar\'e metric tensor.} The Poincar\'e ball model $(\mathbb D^n, g^{\mathbb D})$ is defined by the manifold $\mathbb D^n = \{ x \in \mathbb \Re^n: \| x\| <1\}$ equipped with the following Riemannian metric
\begin{align}
g^\D_x = \lambda_x^2 g^E, \quad \text{where\ } \lambda_x := \frac 2 {1- \|x\|^2},
\label{eq:metric_tensor}
\end{align}
and $g^E$ is the Euclidean metric tensor with components $\I_n$ of the standard space $\Re^n$ with the usual Cartesian coordinates.

As the above model is a Riemannian manifold, its metric tensor is fundamental in order to uniquely define most of its geometric properties like distances, inner products (in tangent spaces), straight lines (geodesics), curve lengths or volume elements. In the Poincar\'e ball model, the Euclidean metric is changed by a simple scalar field, hence the model is \textit{conformal} (\textit{i.e.}~angle preserving), yet distorts distances.

\paragraph{Induced distance and norm.} It is known \citep{nickel2017poincar} that the induced distance between 2 points $x, y \in \D^n$ is given by
\begin{align}
d_{\D}(x,y) = \cosh^{-1}\left(1+ 2 \frac{\| x-y\|^2}{(1-\|x\|^2) \cdot (1-\|y\|^2)} \right) \,.
\end{align}
The Poincare norm is then defined as: 
\begin{align}
\|x\|_{\D} := d_{\D}(0,x) = 2 \tanh^{-1}(\|x\|) 
\label{eq:poincare_norm}
\end{align}

\paragraph{Geodesics and exponential map.} We derive parametric expressions of unit-speed geodesics and exponential map in the Poincar\'e ball. Geodesics in $\mathbb D^n$  are all intersections of the Euclidean unit ball $\D^n$ with (degenerated) Euclidean circles orthogonal to the unit sphere $\partial \mathbb D^n$ (equations are derived below). We know from the Hopf-Rinow theorem that the hyperbolic space is complete as a metric space. This guarantees that $\D^n$ is geodesically complete. Thus, the exponential map is defined for each point $x \in \D^n$ and any $v \in \R^n (= T_x\D^n)$. To derive its closed form expression, we first prove the following.
\begin{theorem} (Unit-speed geodesics)
\label{thm:geodesic_unit_speed}
Let $x \in \D^n$ and $v \in T_x\D^n (= \R^n)$ such that $g_x^\D(v,v) = 1$. The unit-speed geodesic $\gamma_{x,v}: \Re_+ \rightarrow \D^n$ with $\gamma_{x,v}(0) = x$ and $\dot{\gamma}_{x,v}(0) = v$ is given by
\begin{align}
\gamma_{x,v}(t) = \frac{\left( \lambda_x \cosh(t) + \lambda_x^2 \inp{x}{v} \sinh(t) \right) x + \lambda_x \sinh(t) v}{1 + (\lambda_x - 1) \cosh(t) + \lambda_x^2 \inp{x}{v} \sinh(t) }
\label{eq:geodesic_unit_speed}
\end{align}
\end{theorem}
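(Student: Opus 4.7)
The plan is to verify the claimed formula directly by checking: (i) the initial conditions $\gamma_{x,v}(0)=x$ and $\dot\gamma_{x,v}(0)=v$; (ii) the image traces a geodesic of $\D^n$; and (iii) the parameterization has unit hyperbolic speed. Uniqueness of geodesics given their initial position and velocity then finishes the proof. The formula can be motivated by pushing forward the elementary radial geodesic through $0$, namely $t\mapsto 2\tanh(t/2)\,u$ (obtained by integrating the unit-speed ODE $r'(t)=(1-r^2)/2$), under the Möbius isometry of $\D^n$ sending $0$ to $x$, and then simplifying; since the result is already in closed form, direct verification is the most compact route.

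For (i), setting $t=0$ collapses numerator and denominator to $\lambda_x x$ and $\lambda_x$, yielding $\gamma_{x,v}(0)=x$. Differentiating via the quotient rule, using $\sinh(0)=0$, the terms proportional to $\lambda_x^3\inp{x}{v}\,x$ cancel exactly between $N'(0)D(0)$ and $N(0)D'(0)$ (where $N,D$ denote numerator and denominator), leaving $\dot\gamma_{x,v}(0)=v$. For (ii), I would invoke the classical characterization: geodesics of $\D^n$ are the Euclidean diameters and the Euclidean circular arcs meeting $\partial\D^n$ orthogonally. Because $\gamma_{x,v}(t)$ is at every $t$ a linear combination of $x$ and $v$, its image lies in the $2$-plane $\mathrm{span}(x,v)$ through $0$, on which the induced Poincar\'e metric is the $2$-dimensional Poincar\'e disk metric by rotational symmetry. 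Computing $\lim_{t\to\pm\infty}\gamma_{x,v}(t)$ exhibits two points on $\partial\D^n$, and the unique Euclidean circle through these two limits and $x$ is tangent to $v$ at $x$ and orthogonal to $\partial\D^n$, thus matching the geodesic circle.

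The principal obstacle is (iii): proving $\lambda_{\gamma_{x,v}(t)}^2\,\|\dot\gamma_{x,v}(t)\|^2\equiv 1$. Expanding $\|\gamma_{x,v}(t)\|^2$ and $\|\dot\gamma_{x,v}(t)\|^2$ yields polynomials in $\cosh(t),\sinh(t),\|x\|^2,\inp{x}{v}$, and $\|v\|^2$ that look formidable. The decisive input is the hypothesis $g^\D_x(v,v)=1$, equivalently $\|v\|^2=\lambda_x^{-2}$; substituting this, together with $\cosh^2-\sinh^2=1$ and $\lambda_x=2/(1-\|x\|^2)$, the goal is to establish the identity that the denominator of $\gamma_{x,v}(t)$ equals $\lambda_x/\lambda_{\gamma_{x,v}(t)}$. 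Once that identity holds, $\lambda_{\gamma_{x,v}(t)}^2\,\|\dot\gamma_{x,v}(t)\|^2$ collapses to $1$ after a short computation. Tracking these cancellations is the main computational hurdle, and the specific coefficients $\lambda_x$ on $\cosh$ and $\lambda_x^2\inp{x}{v}$ on $\sinh$ in the formula are calibrated precisely to make them work.
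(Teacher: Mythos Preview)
Your approach is correct in principle but takes a genuinely different route from the paper. The paper does \emph{not} verify the formula directly; instead it transfers the problem to the hyperboloid model $\HH^n$ via the explicit isometry $\psi(x)=(\lambda_x-1,\lambda_x x)$ with inverse $\psi^{-1}(x_0,x')=x'/(1+x_0)$. Since unit-speed geodesics in $\HH^n$ are known in the simple form $\phi(t)=x'\cosh(t)+v'\sinh(t)$, one only needs to compute the pushforward $v'=d\psi_x(v)$ and then apply $\psi^{-1}$; the claimed expression drops out after a short algebraic simplification, and items (i)--(iii) hold automatically because isometries preserve geodesics and speed. Interestingly, you yourself mention the analogous idea (push forward the radial geodesic by the M\"obius isometry sending $0$ to $x$) as motivation, then discard it in favour of direct verification; that discarded idea is essentially the paper's method, just with $\HH^n$ playing the role of the ``simple model'' instead of the origin-centred disk. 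What your route buys is self-containment within $\D^n$, at the cost of the heavy cancellation you flag in (iii) and a somewhat under-specified (ii): knowing that the two endpoints $\lim_{t\to\pm\infty}\gamma(t)$ lie on $\partial\D^n$ and that $x$ lies between them does not by itself show the \emph{entire} curve lies on the orthogonal circle through those three points---you would still need to verify the circle equation for all $t$ (or, equivalently, the geodesic ODE), which is another nontrivial computation. The paper's transfer argument sidesteps both of these computations entirely.
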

\begin{proof}
See appendix~\ref{sec:appendix_thm1_proof}.
\end{proof}

\begin{corollary} (Exponential map)
\label{cor:exp_map} 
The exponential map at a point $x \in \D^n$, namely $\exp_x: T_x\D^n \to \D^n$, is given by
\begin{multline}
\exp_x(v) =\\ \frac{\lambda_x \left( \cosh(\lambda_x \|v\|) + \inp{x}{\frac{v}{\|v\|}} \sinh(\lambda_x \|v\|) \right)}{1 + (\lambda_x - 1) \cosh(\lambda_x \|v\|) + \lambda_x \inp{x}{ \frac{v}{\|v\|}} \sinh(\lambda_x \|v\|) }x +\\
 \frac{\frac{1}{\|v\|} \sinh(\lambda_x \|v\|)}{1 + (\lambda_x - 1) \cosh(\lambda_x \|v\|) + \lambda_x \inp{x}{ \frac{v}{\|v\|}} \sinh(\lambda_x \|v\|) }v
\end{multline}
\end{corollary}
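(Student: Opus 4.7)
The plan is to reduce the statement to a direct application of Theorem~\ref{thm:geodesic_unit_speed} via a speed-reparametrization of the geodesic. By definition $\exp_x(v) = \gamma(1)$ for the geodesic $\gamma$ with $\gamma(0)=x$ and $\dot{\gamma}(0)=v$, and by the standard relation between arbitrary-speed and unit-speed parametrizations one has
\begin{equation}
\exp_x(v) \;=\; \gamma_{x,\, v/\|v\|_x}\!\bigl(\|v\|_x\bigr),
\end{equation}
where $\|v\|_x := \sqrt{g^\D_x(v,v)}$ is the Riemannian norm on $T_x\D^n$. Because $g^\D_x = \lambda_x^2\, g^E$ by~\eqref{eq:metric_tensor}, this Riemannian norm is simply $\|v\|_x = \lambda_x \|v\|$. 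Geodesic completeness of $\D^n$ (Hopf--Rinow, already invoked in the text) guarantees that the right-hand side is defined for every $v \in T_x\D^n$.

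Next I would set $t := \lambda_x \|v\|$ and $u := v/(\lambda_x \|v\|)$, which satisfies $g^\D_x(u,u)=1$, and substitute both into the closed form~\eqref{eq:geodesic_unit_speed} provided by Theorem~\ref{thm:geodesic_unit_speed}. The only two identities needed are
\begin{equation}
\inp{x}{u} \;=\; \frac{1}{\lambda_x}\,\inp{x}{v/\|v\|}, \qquad \lambda_x\sinh(t)\,u \;=\; \frac{\sinh(\lambda_x\|v\|)}{\|v\|}\,v.
\end{equation}
The first cancels exactly one factor of $\lambda_x$ at each occurrence of $\lambda_x^2\inp{x}{v}\sinh(t)$ in~\eqref{eq:geodesic_unit_speed}, both in the coefficient of $x$ in the numerator and in the denominator, turning those terms into $\lambda_x\inp{x}{v/\|v\|}\sinh(\lambda_x\|v\|)$. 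The second converts the trailing $\lambda_x\sinh(t)\,u$ vector in Theorem~\ref{thm:geodesic_unit_speed} into the displayed $v$-term with coefficient $\sinh(\lambda_x\|v\|)/\|v\|$. Factoring the common $\lambda_x$ out of the numerator's $x$-coefficient then reproduces the stated formula verbatim.

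\paragraph{Anticipated difficulty.}
There is no conceptual obstacle once Theorem~\ref{thm:geodesic_unit_speed} is in hand: the argument reduces to careful bookkeeping of the factors $\lambda_x$ and $\|v\|$. The only minor subtlety is that the displayed formula is superficially singular at $v=0$ because it contains $v/\|v\|$; but the accompanying $\sinh(\lambda_x\|v\|)$ vanishes and the limit recovers $\exp_x(0)=x$, consistently with $\gamma_{x,u}(0)=x$ for any unit $u$. No additional estimates or case analyses are required.
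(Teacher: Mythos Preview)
Your proposal is correct and follows exactly the same approach as the paper: normalize $v$ to unit Riemannian speed via $u=v/\sqrt{g^\D_x(v,v)}$, write $\exp_x(v)=\gamma_{x,u}(\sqrt{g^\D_x(v,v)})$, and substitute into Eq.~\eqref{eq:geodesic_unit_speed} using $\sqrt{g^\D_x(v,v)}=\lambda_x\|v\|$. You have simply spelled out the algebraic bookkeeping (and the harmless $v=0$ limit) that the paper leaves implicit.
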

\begin{proof}
See appendix~\ref{proof:exp_map}.
\end{proof}

We also derive the following fact (useful for future proofs).
\begin{corollary}
Given any arbitrary geodesic in $\D^n$, all its points are coplanar with the origin $O$.
\label{cor:coplanar}
\end{corollary}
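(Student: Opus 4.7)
The plan is to read off the coplanarity directly from the explicit parametric formula in Theorem~\ref{thm:geodesic_unit_speed}. Inspecting the right-hand side of Eq.~\eqref{eq:geodesic_unit_speed}, one sees that for every $t$ the point $\gamma_{x,v}(t)$ is a scalar combination of the two fixed vectors $x$ and $v$, with coefficients that depend only on $t$ (and on the constants $\lambda_x$ and $\inp{x}{v}$). Therefore the image of $\gamma_{x,v}$ is entirely contained in the linear subspace $\mathrm{span}(x,v) \subset \R^n$, which has dimension at most two and which passes through the origin $O$ by definition of a linear subspace.

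The first step of the proof is therefore simply to rewrite Eq.~\eqref{eq:geodesic_unit_speed} in the form $\gamma_{x,v}(t) = \alpha(t)\, x + \beta(t)\, v$ where $\alpha$ and $\beta$ are the corresponding scalar-valued functions of $t$. The second step is to invoke that any geodesic of $\D^n$ (starting at an arbitrary point $x\in\D^n$ in an arbitrary direction $v\in T_x\D^n$) is, up to affine reparameterization of the parameter, a unit-speed geodesic of the form covered by Theorem~\ref{thm:geodesic_unit_speed}, so that its trace coincides with the trace of some $\gamma_{x,v}$. Since reparameterization does not change the image, coplanarity with $O$ is preserved.

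The only slightly delicate step is to handle degeneracies of the plane $\mathrm{span}(x,v)$. If $x = 0$, or if $v$ is parallel to $x$, then $\mathrm{span}(x,v)$ collapses to a line (or to $\{0\}$), but in all of these cases the trace is still contained in any $2$-plane through $O$ that contains this line, so the conclusion still holds; it suffices to pick any $2$-dimensional subspace containing $\mathrm{span}(x,v)$. I would mention this edge case explicitly to avoid ambiguity.

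I do not expect any genuine obstacle here: once Theorem~\ref{thm:geodesic_unit_speed} is in hand, the corollary is essentially an inspection of the closed-form expression, and the remainder is bookkeeping (reparameterization and degenerate cases). The proof is thus very short, on the order of a few lines.
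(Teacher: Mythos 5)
Your proposal is correct and follows essentially the same route as the paper: the paper's proof likewise observes that Theorem~\ref{thm:geodesic_unit_speed} expresses $\gamma_{x,v}(t)$ as a scalar combination of $x$ and $v$, so the trace lies in $\mathrm{span}(x,v)$, a plane through $O$. Your extra care about reparameterization and the degenerate case where $x$ and $v$ are collinear is a welcome refinement but does not change the argument.
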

\begin{proof}
See appendix~\ref{proof:coplanar}.
\end{proof}

\paragraph{Angles in hyperbolic space.}  It is natural to extend the Euclidean notion of an angle to any geodesically complete Riemannian manifold. For any points A, B, C on such a manifold, the angle $\angle ABC$ is the angle between the initial tangent vectors of the geodesics connecting B with A, and B with C, respectively. In the Poincar\'e ball, the angle between two tangent vectors $u,v\in T_x\D^n$ is given by 
\begin{equation}
\cos(\angle(u,v))=\frac{g^{\D}_x(u,v)}{\sqrt{g^{\D}_x(u,u)}\sqrt{g^{\D}_x(v,v)}} = \frac{\inp{u}{v}}{\|u\| \|v\|}
\label{eq:angle-geodesics}
\end{equation}
The second equality happens since $g^{\D}$ is conformal to $g^E$. 

\paragraph{Hyperbolic trigonometry.} The notion of angles and geodesics allow  definition of the notion of a triangle in the Poincar\'e ball. Then, the classic theorems in Euclidean geometry have hyperbolic formulations~\citep{parkkonenhyperbolic}. In the next section, we will use the following theorems.

Let $A, B, C \in \D^n$. Denote by $\angle B := \angle ABC$ and by $c = d_\D(B,A)$ the length of the hyperbolic segment BA (and others). Then, the hyperbolic laws of cosines and sines hold respectively
\begin{align}
\cos (\angle B) = \frac{\cosh(a) \cosh(c) - \cosh(b)}{\sinh(a) \sinh(c)} \\
\frac{\sin (\angle A)}{\sinh(a)} = \frac{\sin (\angle B)}{\sinh(b)}  = \frac{\sin (\angle C)}{\sinh(c)}
\label{eq:sine_law}
\end{align}

\paragraph{Embedding trees in hyperbolic vs Euclidean space.}\ \ Finally, we briefly explain why hyperbolic spaces are better suited than Euclidean spaces for embedding trees. However, note that our method is applicable to any DAG.

\citep{gromov1987hyperbolic} introduces a notion of $\delta$-hyperbolicity in order to characterize how `hyperbolic' a metric space is. For instance, the Euclidean space $\Re^n$ for $n\geq 2$ is not $\delta$-hyperbolic for any $\delta\geq 0$, while the Poincar\'e ball $\D^n$ is $\log(1+\sqrt{2})$-hyperbolic. This is formalized in
the following theorem\footnote{\url{https://en.wikipedia.org/wiki/Hyperbolic_metric_space}} (section 6.2 of \citep{gromov1987hyperbolic}, proposition 6.7 of \citep{bowditch2006course}): 

\textit{Theorem}: For any $\delta > 0$, any $\delta$-hyperbolic metric space $(X,d_X)$ and any set of points $x_1,...,x_n\in X$, there exists a finite weighted tree $(T,d_T)$ and an embedding $f:T\to X$ such that for all $i,j$,
\begin{equation}
\vert d_T(f^{-1}(x_i),f^{-1}(x_j))- d_X(x_i,x_j)\vert=\mathcal{O}(\delta\log(n)).
\end{equation}

Conversely, any tree can be embedded with arbitrary low distortion into the Poincar\'e disk (with only 2 dimensions), whereas this is not true for Euclidean spaces even when an unbounded number of dimensions is allowed~\citep{sarkar2011low,de2018representation}.

The difficulty in embedding trees having a branching factor at least $2$ in a quasi-isometric manner comes from the fact that they have an exponentially increasing number of nodes with depth. The exponential volume growth of hyperbolic metric spaces confers them enough capacity to embed trees quasi-isometrically, unlike the Euclidean space.

\section{Entailment Cones in the Poincar\'e Ball}\label{sec:theory}

In this section, we define ``entailment'' cones that will be used to embed hierarchical structures in the Poincar\'e ball. They generalize and improve over the idea of order embeddings \citep{vendrov2015order}.

\begin{figure}[!htp]
 \centering
  \includegraphics[width=0.4\textwidth]{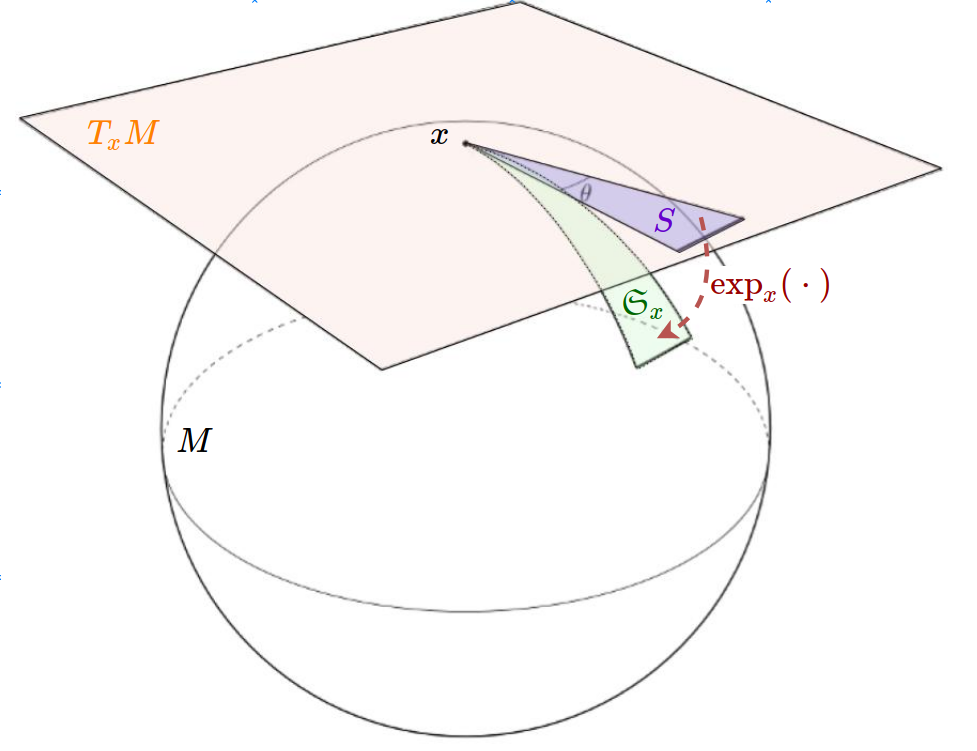}
\caption{Convex cones in a complete Riemannian manifold. }
\label{fig:convex_cones}
\end{figure}

\paragraph{Convex cones in a complete Riemannian manifold.}
We are interested in generalizing the notion of a convex cone to any geodesically complete Riemannian manifold $\mathcal{M}$ (such as hyperbolic models). In a vector space, a convex cone $S$ (at the origin) is a set that is closed under non-negative linear combinations
\begin{align}
v_1, v_2 \in S  \quad \Longrightarrow \alpha v_1 + \beta v_2 \in S \quad  (\forall \alpha, \beta \geq 0) \,.
\end{align}
The key idea for generalizing this concept is to make use of the exponential map at a point $x \in\mathcal{M}$.
\begin{align}
\exp_x: T_x\mathcal{M}  \to \mathcal{M},\quad T_x\mathcal{M}  =\; \text{tangent space at $x$}
\end{align}
We can now take any cone in the tangent space $S \subseteq T_x\mathcal{M}$ at a fixed point $x$ and map it into a set $\frak S_x \subset\mathcal{M}$, which we call the $S$-cone at $x$, via
\begin{align}
\frak S_x := \exp_x \left( S \right), \quad S \subseteq T_x\mathcal{M} \,.
\label{eq:riem_cone}
\end{align}
Note that, in the above definition, we desire that the exponential map be injective. We already know that it is a local diffeomorphism. Thus, we restrict the tangent space in Eq.~\ref{eq:riem_cone} to the ball $\mathcal{B}^n(O,r)$, where $r$ is the injectivity radius of $M$ at $x$. Note that for hyperbolic space models the injectivity radius of the tangent space at any point is infinite, thus no restriction is needed.

\paragraph{Angular cones in the Poincar\'e ball.}
We are interested in special types of cones in $\D^n$ that can extend in all space directions. We want to avoid heavy cone intersections and to have capacity that scales exponentially with the space dimension. To achieve this, we want the definition of cones to exhibit the following four intuitive properties detailed below. Subsequently, solely based on these necessary conditions, we formally prove that the optimal cones in the Poincar\'e ball have a closed form expression.

{\bf 1) Axial symmetry.} For any $x \in \D^n \setminus \{0\}$, we require circular symmetry with respect to a central axis of the cone $\frak S_x$. We define this axis to be the spoke through $x$ from $x$:
\begin{align}
A_x := \{ x' \in \D^n : x' = \alpha x ,  \ \frac{1}{\|x\|} > \alpha \geq 1 \}
\label{eq:spoke}
\end{align}

Then, we fix any tangent vector with the same direction as $x$, e.g. $\bar{x} = \exp_x^{-1}\left(\frac{1 + \|x\|}{2 \|x\|}x\right) \in T_x\D^n$. One can verify using Corollary~\ref{cor:exp_map} that $\bar{x}$ generates the axis-oriented geodesic as:

\begin{align}
A_x = \exp_x\left( \{ y \in \R^n : y = \alpha \bar{x} , \ \alpha > 0 \} \right).
\end{align}

We next define the angle $\angle(v, \bar{x})$ for any tangent vector $v \in T_x\D^n$ as in Eq.~\ref{eq:angle-geodesics}. Then, the axial symmetry property is satisfied if we define the angular cone at $x$ to have a non-negative aperture $2\psi(x) \ge 0$ as follows:
\begin{align}
& S_x^{\psi(x)} := \{ v \in T_x\D^n: \angle(v, \bar{x}) \le \psi(x) \} \\ \nonumber
& \frak S^{\psi(x)}_x := \exp_x(S_x^{\psi(x)}).
\end{align}

We further define the conic border (face):
\begin{align}
\partial S^\psi := \{ v: \angle(v, \bar{x}) = \psi(x) \}, \quad \partial \frak S^\psi_x := \exp_x(\partial S_x^\psi).
\end{align}

{\bf 2) Rotation invariance.} We want the definition of cones $\frak S^{\psi(x)}_x$ to be independent of the angular coordinate of the apex $x$, \textit{i.e.} to only depend on the (Euclidean) norm of $x$:
\begin{align}
\psi(x) = \psi(x')  \quad (\forall x,x' \in \D^n\setminus \{0\},\ \mathrm{s.t.\ }\Vert x\Vert=\Vert x'\Vert).
\label{eq:rotation_invariant}
\end{align}

This implies that there exists $\tilde{\psi}:(0,1)\to[0,\pi)$ s. t. for all $x\in\D^n\setminus \{0\}$ we have $\psi(x)=\tilde{\psi}(\Vert x\Vert)$.

{\bf 3) Continuous cone aperture functions}. We require the aperture $\psi$ of our cones to be a continuous function. Using Eq.~\ref{eq:rotation_invariant}, it is equivalent to the continuity of $\tilde{\psi}$. This requirement seems reasonable and will be helpful in order to prove uniqueness of the optimal entailment cones. When optimization-based training is employed, it is also necessary that this function be differentiable. Surprisingly, we will show below that the optimal functions $\tilde{\psi}$ are actually smooth, even when only requiring continuity.


{\bf 4) Transitivity of nested angular cones.} We want cones to determine a partial order in the embedding space. The difficult property is transitivity. We are interested in defining a cone width function $\psi(x)$ such that the resulting angular cones satisfy the \textit{transitivity} property of partial order relations, \textit{i.e.} they form a nested structure as follows
\begin{align}
\forall x,x' \in \D^n\setminus \{0\}: \quad x' \in \frak S^{\psi(x)}_x  \; \Longrightarrow \; \frak S^{\psi(x')}_{x'}
\subseteq \frak S^{\psi(x)}_x.
\label{eq:cone_transitivity}
\end{align}

\paragraph{Closed form expression of the optimal $\psi$.} We now analyze the implications of the above necessary properties. Surprisingly, the optimal form of the function $\psi$ admits an interesting closed-form expression. We will see below that mathematically $\psi$ cannot be defined on the entire open ball $\D^n$. Towards these goals, we first prove the following.

\begin{lemma}
\label{lemma:acute_angle}
If transitivity holds, then 
\begin{align}
\forall x \in \text{Dom}(\psi): \quad \psi(x) \leq \frac{\pi}{2}.
\end{align}
\end{lemma}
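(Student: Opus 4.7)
The plan is to argue by contradiction, exploiting transitivity at a regular boundary point of the cone $\frak S^{\psi(x)}_x$, where locally the big cone is just a half-space while the inner cone $\frak S^{\psi(x')}_{x'}$ still has a nontrivial aperture. Suppose for contradiction that $\psi(x) > \pi/2$ for some $x$ in the domain. First I would pick a unit tangent vector $v \in T_x\D^n$ realizing the conic border, i.e.\ $\angle(v, \bar x) = \psi(x)$, and set $x' := \exp_x(\epsilon v)$ for a small $\epsilon > 0$; then $x' \in \partial \frak S^{\psi(x)}_x$, so by transitivity $\frak S^{\psi(x')}_{x'} \subseteq \frak S^{\psi(x)}_x$. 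Next I would analyze this inclusion infinitesimally at $x'$: since the exponential map is a diffeomorphism on the whole Poincar\'e ball, $\partial \frak S^{\psi(x)}_x$ is a smooth hypersurface away from the apex $x$, and admits an inward unit normal $n_{in} \in T_{x'}\D^n$. A first-order expansion then shows that a short geodesic $t \mapsto \exp_{x'}(tu)$ remains in $\frak S^{\psi(x)}_x$ only if $\inp{u}{n_{in}} \ge 0$.

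Applying this condition to every $u \in S^{\psi(x')}_{x'}$, the whole tangent cone at $x'$ (axis $\bar{x'}$, half-aperture $\psi(x')$) must sit in the closed half-space $\{u : \inp{u}{n_{in}} \ge 0\}$. The most extreme direction in this tangent cone, taken on the side opposite to $n_{in}$, makes angle $\angle(\bar{x'}, n_{in}) + \psi(x')$ with $n_{in}$, and this angle must stay $\le \pi/2$, giving
\begin{equation*}
\psi(x') \;\le\; \frac{\pi}{2} - \angle(\bar{x'}, n_{in}) \;\le\; \frac{\pi}{2}.
\end{equation*}
Finally, invoking rotation invariance and continuity of $\tilde\psi$, as $\epsilon \to 0^+$ one has $\|x'\| \to \|x\|$ and hence $\psi(x') = \tilde\psi(\|x'\|) \to \tilde\psi(\|x\|) = \psi(x) > \pi/2$, contradicting the bound just obtained.

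The main obstacle I expect is the local half-space step: one has to justify carefully that $\partial \frak S^{\psi(x)}_x$ is smooth at interior boundary points (which follows from $\exp_x$ being a global diffeomorphism on $\D^n$, guaranteed by Corollary~\ref{cor:exp_map} and the infinite injectivity radius of hyperbolic space) and linearize the containment of the inner cone at $x'$ correctly. Once that is in place, the rest reduces to the elementary Euclidean observation that a circular cone of half-aperture greater than $\pi/2$ cannot fit inside any half-space, combined with the continuity of $\tilde\psi$ assumed in property~3 and the rotation invariance of property~2.
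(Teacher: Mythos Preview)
Your argument is correct and follows the same overall plan as the paper: assume $\psi(x)>\pi/2$, show that any boundary point $x'\in\partial\frak S^{\psi(x)}_x$ must satisfy $\psi(x')\le\pi/2$, and then slide $x'$ back to $x$ along the border to contradict the continuity of $\tilde\psi$. The only difference is in how the bound at $x'$ is obtained. You linearize: near the smooth boundary point $x'$ the outer cone looks like a closed half-space with inward normal $n_{in}$, so the tangent cone $S^{\psi(x')}_{x'}$ must sit in that half-space, which caps its half-aperture at $\pi/2$. The paper avoids the smooth-hypersurface and normal-vector machinery and instead argues with explicit angles in the $2$-plane through $O,x,x'$ (furnished by Corollary~\ref{cor:coplanar}): by conformality, the boundary geodesic $xx'$ and the radial spoke at $x'$ meet at $x'$ in two supplementary angles, and a direct transitivity violation shows each of these angles is at least $\psi(x')$, whence $\psi(x')\le\pi/2$. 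The paper's route is more elementary and self-contained; yours is more conceptual and would port to any complete manifold where $\exp_x$ is a global diffeomorphism, without relying on the coplanarity fact specific to $\D^n$.
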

\begin{proof}
See appendix~\ref{proof:lemma_acute_angle}.
\end{proof}

Note that so far we removed the origin $0$ of $\D^n$ from our definitions. However, the above surprising lemma implies that we cannot define a useful cone at the origin. To see this, we first note that the origin should ``entail" the entire space $\D^n$, i.e. $\frak S_0 = \D^n$. Second, similar with property 3, we desire the cone at 0 be a continuous deformation of the cones of any sequence of points $(x_n)_{n \geq 0}$ in $\D^n\setminus \{0\}$ that converges to 0. Formally, $\lim_{n \rightarrow \infty} \frak S_{x_n} = \frak S_0$ when $\lim_{n \rightarrow \infty} x_n = 0$. However, this is impossible because Lemma~\ref{lemma:acute_angle} implies that the cone at each point $x_n$ can only cover at most half of $\D^n$. We further prove the following:
\begin{theorem}
\label{thm:sin_norm_product}
If transitivity holds, then the function
\begin{align}
h : (0,1) \cap \text{Dom}(\tilde{\psi}) \rightarrow \R_+, \quad h(r):= \frac{r}{1 - r^2} \sin(\tilde{\psi}(r)),
\end{align}
is non-increasing.
\end{theorem}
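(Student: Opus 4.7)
The plan is to reduce the problem to a 2-dimensional hyperbolic triangle and extract the constraint imposed by transitivity via the hyperbolic law of sines. By Corollary~\ref{cor:coplanar}, the origin $O$, any $x\in\mathbb D^n\setminus\{0\}$, and any geodesic through $x$ all lie in a common 2-plane, so I may argue inside a totally-geodesic copy of the Poincar\'e disk. Fix $r_1\le r_2$ in $(0,1)\cap\text{Dom}(\tilde\psi)$ and a point $x$ with $\|x\|=r_1$. Since geodesics in $\mathbb D^n$ are Euclidean circular arcs orthogonal to $\partial\mathbb D^n$ and, by Lemma~\ref{lemma:acute_angle}, the border of $\mathfrak S^{\psi(x)}_x$ leaves $x$ at an acute angle $\tilde\psi(r_1)\le\pi/2$ to the axis $A_x$, the Euclidean norm increases monotonically along the upper border geodesic $G$ of $\mathfrak S^{\psi(x)}_x$ in the outward direction from $x$; hence I can pick $x'\in G$ with $\|x'\|=r_2$.

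Next I compute the interior angles of the hyperbolic triangle $Oxx'$. At $x$ the tangent to the side $xx'$ is the forward tangent of $G$ (since $G$ is the unique geodesic through $x$ and $x'$), which makes angle $\tilde\psi(r_1)$ with the axis direction $\bar x$, while the tangent to $xO$ points oppositely; so the interior angle at $x$ equals $\pi-\tilde\psi(r_1)$. Transitivity forces $\mathfrak S^{\psi(x')}_{x'}\subseteq \mathfrak S^{\psi(x)}_x$; the binding constraint is that the border of the cone at $x'$ adjacent to $G$ does not cross $G$ at $x'$, equivalently that the angle between the axis at $x'$ and the forward tangent of $G$ past $x'$ is at least $\tilde\psi(r_2)$. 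Reversing both tangents at $x'$ shows that this angle equals the interior angle $\gamma$ of the triangle at $x'$, so the necessary condition reads $\tilde\psi(r_2)\le\gamma$.

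Applying the hyperbolic law of sines (Eq.~\ref{eq:sine_law}) to $Oxx'$, together with $\sin(\pi-\tilde\psi(r_1))=\sin\tilde\psi(r_1)$, yields
$$\frac{\sin\tilde\psi(r_1)}{\sinh\|x'\|_{\mathbb D}}=\frac{\sin\gamma}{\sinh\|x\|_{\mathbb D}}.$$
The identity $\sinh(2\tanh^{-1}r)=2r/(1-r^2)$, derived from Eq.~\ref{eq:poincare_norm}, converts hyperbolic norms to Euclidean ones and rearranges to
$$\frac{r_1}{1-r_1^2}\sin\tilde\psi(r_1)=\frac{r_2}{1-r_2^2}\sin\gamma\ge\frac{r_2}{1-r_2^2}\sin\tilde\psi(r_2).$$
The last inequality uses monotonicity of $\sin$ on $[0,\pi/2]$ together with $\tilde\psi(r_2)\le\pi/2$ (Lemma~\ref{lemma:acute_angle}) and $\gamma<\pi/2$; the second bound follows from the hyperbolic angle-sum inequality $\gamma+(\pi-\tilde\psi(r_1))<\pi$, i.e.\ $\gamma<\tilde\psi(r_1)\le\pi/2$. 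This is exactly $h(r_2)\le h(r_1)$.

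The main obstacle is the geometric identification $\tilde\psi(r_2)\le\gamma$: one has to argue that the border of $\mathfrak S^{\psi(x')}_{x'}$ adjacent to $G$ is really the binding side of the cone at $x'$ and that the angle it makes with the axis at $x'$ coincides with the interior angle $\angle Ox'x$ of the triangle. Once this translation between cone geometry and triangle geometry is pinned down, the remainder is a direct application of the hyperbolic sine law and the Poincar\'e-norm identity for $\sinh$.
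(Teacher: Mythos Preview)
Your proposal is correct and takes essentially the same approach as the paper: place $x'$ on the border geodesic of $\mathfrak S^{\psi(x)}_x$, extract the inequality $\tilde\psi(r_2)\le\angle Ox'x$ from transitivity, and then apply the hyperbolic sine law in the triangle $Oxx'$ together with the identity $\sinh\|x\|_{\D}=2r/(1-r^2)$. Your use of the hyperbolic angle-sum inequality to force $\gamma<\tilde\psi(r_1)\le\pi/2$ is in fact cleaner than the paper's WLOG step (which picks whichever of the two complementary angles at $x'$ is at most $\pi/2$), and your monotonicity claim for the Euclidean norm along $G$ is correct though stronger than needed---the paper only invokes the intermediate value theorem to locate $x'$ with $\|x'\|=r_2$.
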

\begin{proof}
See appendix~\ref{proof:sin_norm_product}.
\end{proof}

\begin{figure}
 \centering
  \includegraphics[width=0.43\linewidth, height=0.43\linewidth]{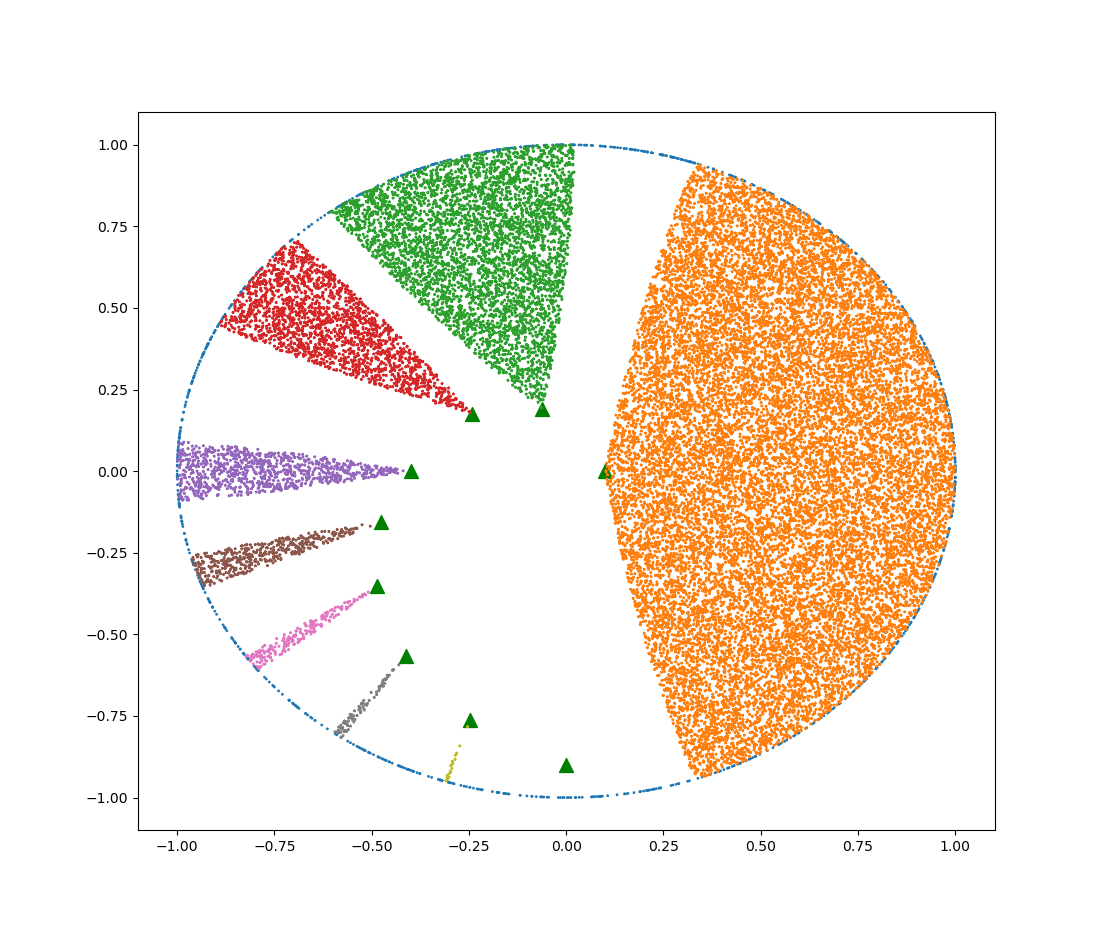}
  \includegraphics[width=0.56\linewidth, height=0.43\linewidth]{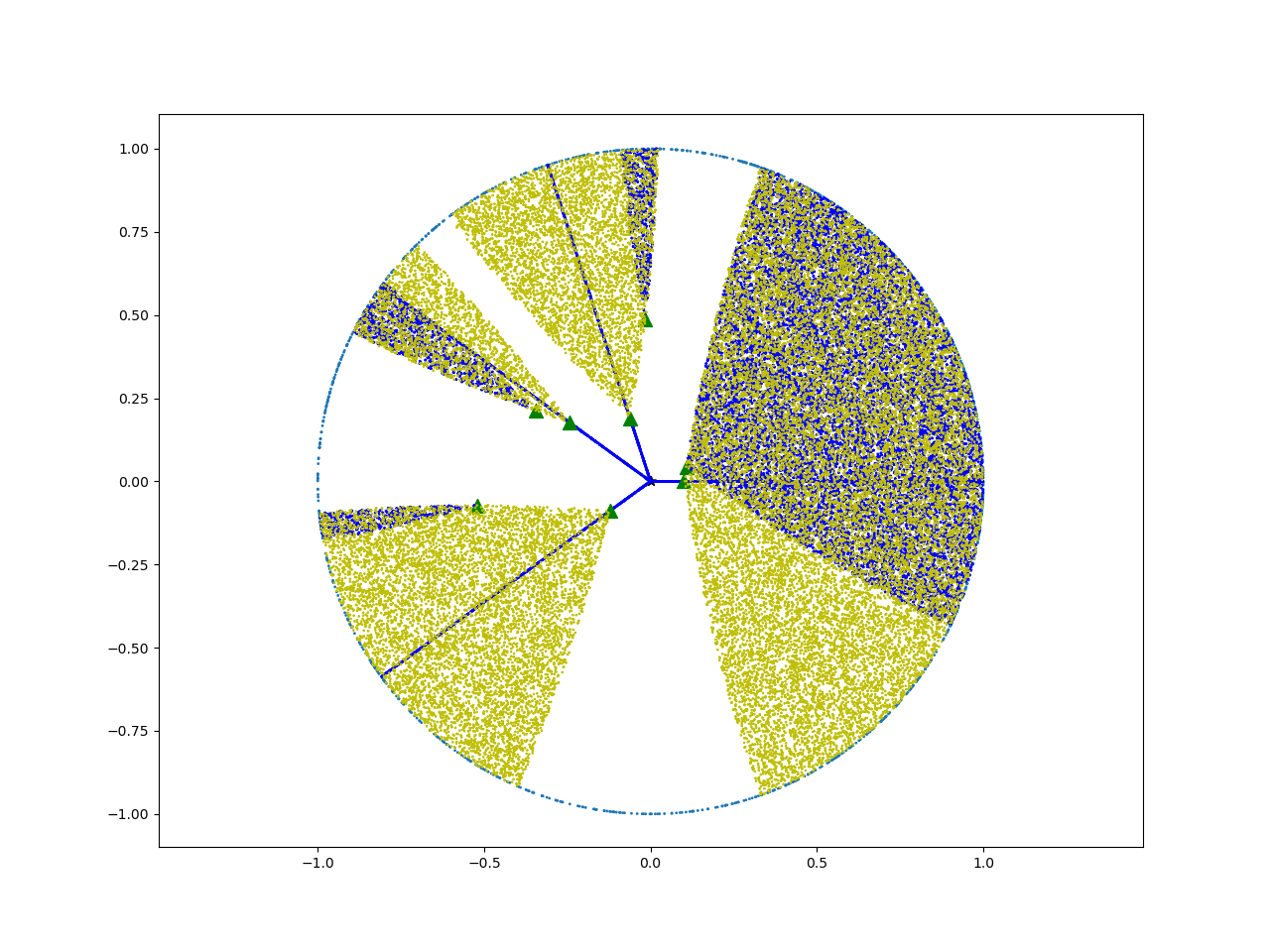}
\caption{Poincar\'e angular cones satisfying Eq.~\ref{eq:cones_closed_form} for K = 0.1. Left: examples of cones for points with Euclidean norm varying from 0.1 to 0.9. Right: transitivity for various points on the border of their parent cones. }
\label{fig:hyp_angular}
\end{figure}

The above theorem implies that a non-zero $\tilde{\psi}$ cannot be defined on the entire $(0,1)$ because $\lim_{r \rightarrow 0} h(r) = 0$, for any function $\tilde{\psi}$. As a consequence, we are forced to restrict $\text{Dom}(\tilde{\psi})$ to some $[\epsilon, 1)$, \textit{i.e.} to leave the open ball $\mathcal{B}^n(O,\epsilon)$ outside of the domain of $\psi$. Then, theorem~\ref{thm:sin_norm_product} implies that
\begin{align}
\forall r \in [\epsilon, 1) : \quad \sin(\tilde{\psi}(r)) \frac{r}{1 - r^2}  \leq \sin(\tilde{\psi}(\epsilon)) \frac{\epsilon}{1 - \epsilon^2}.
\end{align}
Since we are interested in cones with an aperture as large as possible (to maximize model capacity), it is natural to set all terms $h(r)$ equal to $K := h(\epsilon)$, \textit{i.e.} to make $h$ constant:
\begin{align}
\forall r \in [\epsilon, 1) : \quad \sin(\tilde{\psi}(r)) \frac{r}{1 - r^2}  = K,
\end{align}
which gives both a restriction on $\epsilon$ (in terms of $K$):
\begin{align}
K \leq \frac{\epsilon}{1 - \epsilon^2} \quad \iff \quad \epsilon \in \left[ \frac{2K}{1 + \sqrt{1 + 4K^2}}, 1  \right),
\label{eq:closed_form_eps}
\end{align}
as well as a closed form expression for $\psi$
\begin{align}
\psi:\ & \D^n \setminus \mathcal{B}^n(O,\epsilon) \rightarrow (0,\pi/2) \nonumber \\ 
& x \mapsto \arcsin (  K (1 - \|x\|^2)/\|x\| ),
\label{eq:cones_closed_form}
\end{align}
which is also a sufficient condition for transitivity to hold:

\begin{theorem}
If $\psi$ is defined as in Eqs.\ref{eq:closed_form_eps}-\ref{eq:cones_closed_form}, then transitivity holds.
\end{theorem}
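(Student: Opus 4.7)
The plan is to translate cone membership into a trigonometric inequality via the hyperbolic law of sines, first handle the case $n=2$, and then lift to general $n$ by exploiting the axial symmetry of the cones.

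First I would reformulate membership. Since $\psi(x)\le\pi/2$ by Lemma~\ref{lemma:acute_angle} and $\sin$ is a bijection $[0,\pi/2]\to[0,1]$, using $\sin(\psi(x))=K(1-\|x\|^2)/\|x\|$ we have
\[
y\in\frak S_x \ \Longleftrightarrow\ \Psi(x,y):=\sin(\xi(x,y))\frac{\|x\|}{1-\|x\|^2}\le K\ \text{and}\ \xi(x,y)\le\tfrac{\pi}{2}.
\]
Using $\sinh(2\tanh^{-1}(r))=2r/(1-r^2)$, the hyperbolic law of sines applied to the geodesic triangle $Oxy$ then yields the key symmetry $\Psi(x,y)=\Psi(y,x)$. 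An immediate consequence of $x'\in\frak S_x$ is that the interior angle of triangle $Oxx'$ at $x$ equals $\pi-\xi(x,x')\ge\pi/2$, so the opposite side $\overline{Ox'}$ is the longest (hence $\|x'\|\ge\|x\|$) and the remaining interior angles are acute (hence $\xi(x',x)\ge\pi/2$); combining this with $\Psi(x',x)\le K$ and the injectivity of $\sin$ on $[\pi/2,\pi]$ yields $\xi(x',x)\ge\pi-\psi(x')$.

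Next I would settle $n=2$. When $x'$ lies on a boundary geodesic of $\frak S_x$, i.e.\ $\xi(x,x')=\psi(x)$, equality in the previous step forces $\xi(x',x)=\pi-\psi(x')$, which is exactly the statement that the tangent direction continuing the geodesic from $x$ through $x'$ past $x'$ makes angle $\psi(x')$ with $\bar{x'}$. Hence that continuation is itself a boundary geodesic of $\frak S_{x'}$, so the two cones share an entire geodesic as boundary. Equivalently, the boundary geodesics of $\frak S_x$ are the geodesics through $x$ tangent (on the outward side) to the hyperbolic ball $\{z:\phi(\|z\|)\le K\}$ where $\phi(r):=r/(1-r^2)$, from which $\frak S_{x'}\subseteq\frak S_x$ in the boundary case follows at once. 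The strict interior case $\xi(x,x')<\psi(x)$ is handled by continuously moving $x'$ along a curve of constant $\|x'\|$ until it reaches a boundary point of $\frak S_x$ and passing to the limit using continuity of $\psi$ and of $\exp_{x'}$.

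For $n\ge 3$, let $x'\in\frak S_x$ and $x''\in\frak S_{x'}$. Since $\frak S_{x'}$ is axially symmetric around $A_{x'}$, we may rotate $x''$ about the line $\overline{Ox'}$ without changing $x''\in\frak S_{x'}$, $\|x''\|$, or $\angle x'Ox''$. By the spherical triangle inequality applied to the unit vectors $\hat x,\hat x',\hat x''$ (the angles at $O$ are Euclidean by conformality), the maximum of $\angle xOx''$ under this rotation equals $\angle xOx'+\angle x'Ox''$, attained precisely when the rotated $x''$ lies in $\mathrm{span}(O,x,x')$ on the side of $\overline{Ox'}$ opposite to $x$. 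A monotonicity lemma derived from the hyperbolic laws of sines and cosines in triangle $Oxx''$, exploiting $\|x''\|\ge\|x\|$, shows that $\xi(x,x'')$ is a monotonically increasing function of $\omega'':=\angle xOx''\in[0,\pi]$. Hence the rotated coplanar configuration is the worst case for $\frak S_x$ membership, and because it lies in a single 2D plane through $O$ the $n=2$ result applies and delivers $x''\in\frak S_x$.

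The main obstacle I anticipate is the monotonicity lemma in this last step: it is geometrically intuitive (larger angular separation at $O$ pushes $x''$ further from the axis $A_x$) but requires a careful trigonometric argument, with $\|x''\|\ge\|x\|$ used to fix the sign of the derivative. A secondary technical point is making the interior case of the 2D step fully rigorous by exhibiting an explicit deformation that inherits the boundary-case nesting.
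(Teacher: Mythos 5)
The paper gives no proof of this theorem beyond the remark that it is ``similar to that of Theorem~\ref{thm:sin_norm_product}'', so the only thing to compare against is the engine of that earlier proof: the hyperbolic law of sines in the triangle $Oxy$. Your proposal is built on exactly that identity --- your symmetry $\Psi(x,y)=\Psi(y,x)$ is precisely $\sin(\Xi(x,y))\sinh(\Vert x\Vert_\D)=\sin(\Xi(y,x))\sinh(\Vert y\Vert_\D)$ rewritten via Eq.~\ref{eq:poincare_norm} --- so in spirit it is the intended argument; what you supply, and the paper omits entirely, is the actual converse architecture: the two-condition reformulation of membership, the consequences $\Vert x'\Vert\geq\Vert x\Vert$ and $\Xi(x',x)\geq\pi-\psi(x')$, the planar case via the shared boundary geodesic (equivalently, tangency of every boundary geodesic to the sphere $\Vert z\Vert=\epsilon$ with $\epsilon/(1-\epsilon^2)=K$), and the reduction of $n\geq 3$ to the plane by rotating $x''$ about the axis $Ox'$. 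All of these steps are correct. The two items you flag as obstacles are indeed the only outstanding work, and both close. For the monotonicity lemma, with $b=d_\D(O,x'')$ and $c=d_\D(O,x)$ fixed and $b\geq c$, the law of cosines makes $a=d_\D(x,x'')$ increasing in $\omega=\angle xOx''$, and
\[
\frac{\partial}{\partial a}\left(\frac{\cosh a\cosh c-\cosh b}{\sinh a\sinh c}\right)=\frac{\cosh a\cosh b-\cosh c}{\sinh^2 a\,\sinh c}\;\geq\;0,
\]
since $\cosh b\geq\cosh c$; hence $\angle Oxx''$ decreases and $\Xi(x,x'')$ increases in $\omega$, which is where $\Vert x''\Vert\geq\Vert x\Vert$ enters, exactly as you anticipated. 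For the interior case in the plane, the same lemma gives the deformation directly: rotate the pair $(x',y)$ rigidly about $O$ until $x'$ reaches the boundary of $\frak S^{\psi(x)}_x$ on the angular side of the axis $A_x$ that contains $y$; this only increases $\angle xOy$, hence only increases $\Xi(x,y)$, and reduces to the boundary case. One small caution in the last step: if $\angle xOx'+\angle x'Ox''>\pi$, the coplanar opposite-side configuration realizes $2\pi-(\angle xOx'+\angle x'Ox'')$ rather than the sum, but it is still the maximizer of $\angle xOx''$ over the rotation orbit and still coplanar, so the reduction survives unchanged.
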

The above theorem has a proof similar to that of Thm.~\ref{thm:sin_norm_product}.

%
So far, we have obtained a closed form expression for hyperbolic entailment cones. However, we still need to understand how they can be used during embedding learning. For this goal, we derive an equivalent (and more practical) definition of the cone $\frak S^{\psi(x)}_x$:
\begin{theorem}
\label{thm:equiv_def}
For any $x,y\in\D^n\setminus \mathcal{B}^n(O,\epsilon)$, we denote the angle between the half-lines $(xy$ and $(0x$ as
\begin{align}
\Xi(x,y):=\pi - \angle Oxy,
\end{align}
Then, this angle equals 
\begin{align}
\arccos \left( \frac{\inp{x}{y} (1 + \|x\|^2) - \|x\|^2 (1 + \|y\|^2)}{\|x\| \cdot \|x-y\| \sqrt{1 + \|x\|^2 \|y\|^2 - 2 \inp{x}{y}}} \right),
\label{eq:angle_oxy}
\end{align}
Moreover, we have the following equivalent expression of the Poincar\'e entailment cones satisfying Eq.~\ref{eq:cones_closed_form}:
\begin{align}
\frak S^{\psi(x)}_x = \Set{ y \in \D^n | \Xi(x,y) \leq  \arcsin \left(  K \frac{1 - \|x\|^2}{\|x\|} \right) }.
\end{align}
\end{theorem}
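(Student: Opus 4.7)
The plan is to prove the two assertions separately: first the closed-form formula for $\Xi(x,y)$, then the equivalent description of $\frak S^{\psi(x)}_x$, using the first assertion.

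For the formula I would apply the hyperbolic law of cosines (stated just above the theorem) to the hyperbolic triangle $Oxy$ at vertex $x$. Setting $a := d_\D(x,y)$, $b := \|y\|_\D$, $c := \|x\|_\D$, it gives $\cos(\angle Oxy) = (\cosh(a)\cosh(c) - \cosh(b))/(\sinh(a)\sinh(c))$, so by the definition $\Xi(x,y):=\pi-\angle Oxy$ we have $\cos(\Xi(x,y)) = (\cosh(b) - \cosh(a)\cosh(c))/(\sinh(a)\sinh(c))$. I would then substitute the Poincaré-ball identities $\cosh(\|x\|_\D) = (1+\|x\|^2)/(1-\|x\|^2)$ and $\sinh(\|x\|_\D) = 2\|x\|/(1-\|x\|^2)$ (both following from $\|x\|_\D = 2\tanh^{-1}\|x\|$ and the half-angle formulas), together with $\cosh(d_\D(x,y)) = 1 + 2\|x-y\|^2/((1-\|x\|^2)(1-\|y\|^2))$ and the derived identity $\sinh(d_\D(x,y)) = 2\|x-y\|\sqrt{1+\|x\|^2\|y\|^2 - 2\inp{x}{y}}/((1-\|x\|^2)(1-\|y\|^2))$ (obtained from $\sinh^2 = \cosh^2-1$ after the substitution $\|x-y\|^2=\|x\|^2+\|y\|^2-2\inp{x}{y}$). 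Clearing the common denominator $(1-\|x\|^2)^2(1-\|y\|^2)$ in the numerator, I expect an expansion in the three monomials $\|x\|^2,\|y\|^2,\inp{x}{y}$ in which all higher-order terms cancel, leaving $4(\inp{x}{y}(1+\|x\|^2)-\|x\|^2(1+\|y\|^2))$; the denominator collapses to $4\|x\|\|x-y\|\sqrt{1+\|x\|^2\|y\|^2-2\inp{x}{y}}$, and taking $\arccos$ recovers Eq.~\ref{eq:angle_oxy}.

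For the second assertion, by definition $y\in\frak S^{\psi(x)}_x=\exp_x(S^{\psi(x)}_x)$ iff there exists $v\in T_x\D^n$ with $\exp_x(v)=y$ and $\angle(v,\bar x)\le\psi(x)$, and such a $v$ is (a positive multiple of) the initial tangent at $x$ of the geodesic from $x$ to $y$. By conformality of $g^\D$ to $g^E$ (Eq.~\ref{eq:angle-geodesics}), the Riemannian angle $\angle(v,\bar x)$ equals the angle at $x$ between the two Poincaré geodesics through $x$: the diameter $A_x$ and the geodesic joining $x$ to $y$. Since the axis geodesic through $O$ and $x$ is a Euclidean diameter and $\bar x$ is its outward-pointing tangent at $x$ (opposite to the tangent pointing toward $O$), this angle equals $\pi-\angle Oxy = \Xi(x,y)$. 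Substituting the closed form $\psi(x)=\arcsin(K(1-\|x\|^2)/\|x\|)$ from Eq.~\ref{eq:cones_closed_form} yields the claimed equality of sets.

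The main obstacle will be the algebraic cancellation in the first part: the numerator $(1+\|y\|^2)(1-\|x\|^2)^2 - ((1-\|x\|^2)(1-\|y\|^2)+2\|x-y\|^2)(1+\|x\|^2)$ looks bulky, but after expansion in $\|x\|^2,\|y\|^2,\inp{x}{y}$ most monomials pair off, producing the clean combination above. The rest of the proof is conceptually straightforward once the angle formula is in hand.
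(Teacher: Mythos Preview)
Your proposal is correct and follows essentially the same route as the paper: apply the hyperbolic law of cosines at the vertex $x$ of the triangle $Oxy$ to obtain $\cos(\angle Oxy)$ in terms of $\cosh$ and $\sinh$ of the side lengths, then substitute the Poincar\'e--ball identities and simplify algebraically to reach Eq.~\ref{eq:angle_oxy}; for the cone description, use axial symmetry/conformality to identify the defining angle $\angle(v,\bar x)$ with $\pi-\angle Oxy=\Xi(x,y)$ and plug in the closed form of $\psi(x)$. Your write-up is in fact more explicit than the paper's (which leaves the algebraic reformulation and the angle identification as one-liners), and the cancellations you anticipate in the numerator indeed go through exactly as you describe.
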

\begin{proof}
See appendix~\ref{proof:equiv_def}.
\end{proof}

Examples of 2-dimensional Poincar\'e cones corresponding to apex points located at different radii from the origin are shown in Figure~\ref{fig:hyp_angular}. This figure also shows that transitivity is satisfied for some points on the border of the hypercones.


\paragraph{Euclidean entailment cones.} One can easily adapt the above proofs to derive entailment cones in the Euclidean space $(\Re^n,g^E)$. The only adaptations are: i) replace the hyperbolic cosine law by usual Euclidean cosine law, ii) geodesics are straight lines, and iii) the exponential map is given by $\exp_x(v)=x+v$. Thus, one similarly obtains that $h(r)=r\sin(\psi(r))$ is non-decreasing, the optimal values of $\psi$ are obtained for constant $h$ being equal to $K \leq \varepsilon$ and
\begin{align}
\frak{S}^{\psi(x)}_x=\lbrace y\in\Re^n\mid \Xi(x,y)\leq \psi(x)\rbrace,
\end{align} 
where $\Xi(x,y)$ now becomes
\begin{align}
\Xi(x,y)=\arccos\left(\dfrac{\Vert y\Vert^2-\Vert x\Vert^2-\Vert x-y\Vert^2}{2\Vert x\Vert\cdot \Vert x-y\Vert}\right),
\label{eq:psi_eucl}
\end{align}
for all $x,y\in\Re^n\setminus\mathcal{B}(O,\varepsilon)$.
From a learning perspective, there is no need to be concerned about the Riemannian optimization described in Section~\ref{sec:riem_opt}, as the usual Euclidean gradient-step is used in this case.

\begin{figure*}[!htp]
 \centering
  \includegraphics[width=0.22\linewidth, height=0.22\linewidth]{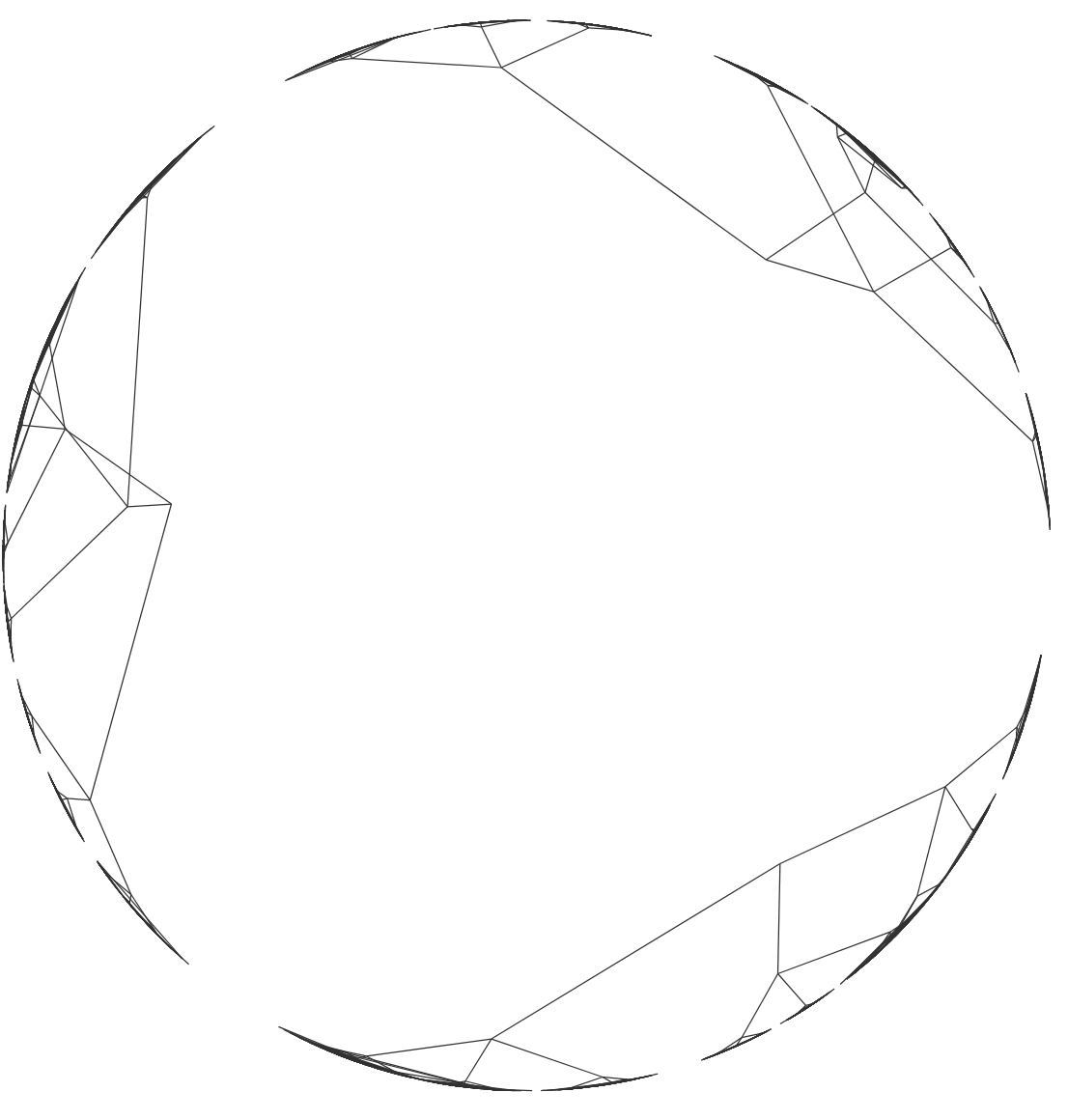}
  \includegraphics[width=0.22\linewidth, height=0.22\linewidth]{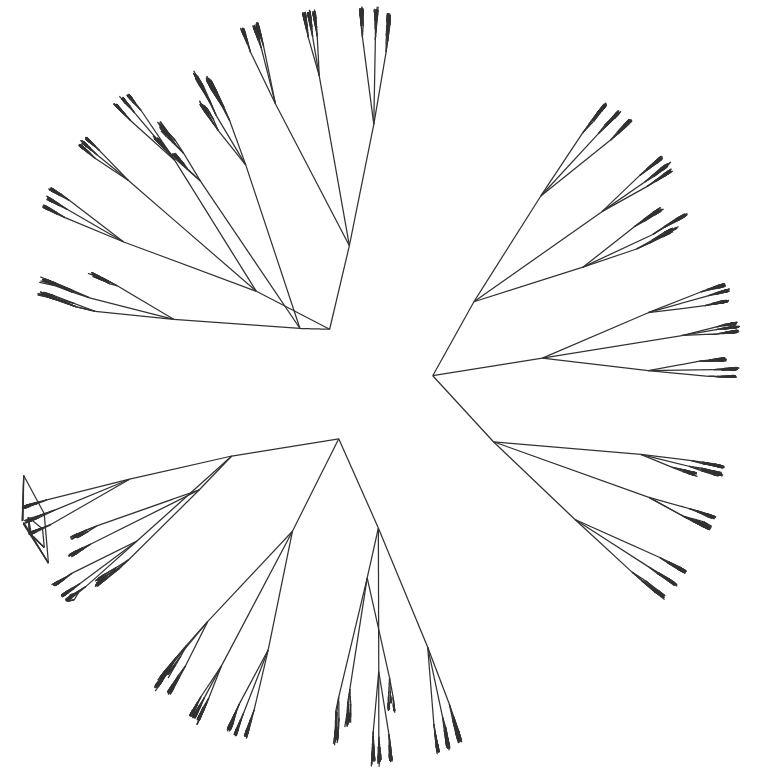}
  \hspace{0.9cm}
  \includegraphics[width=0.24\linewidth, height=0.22\linewidth]{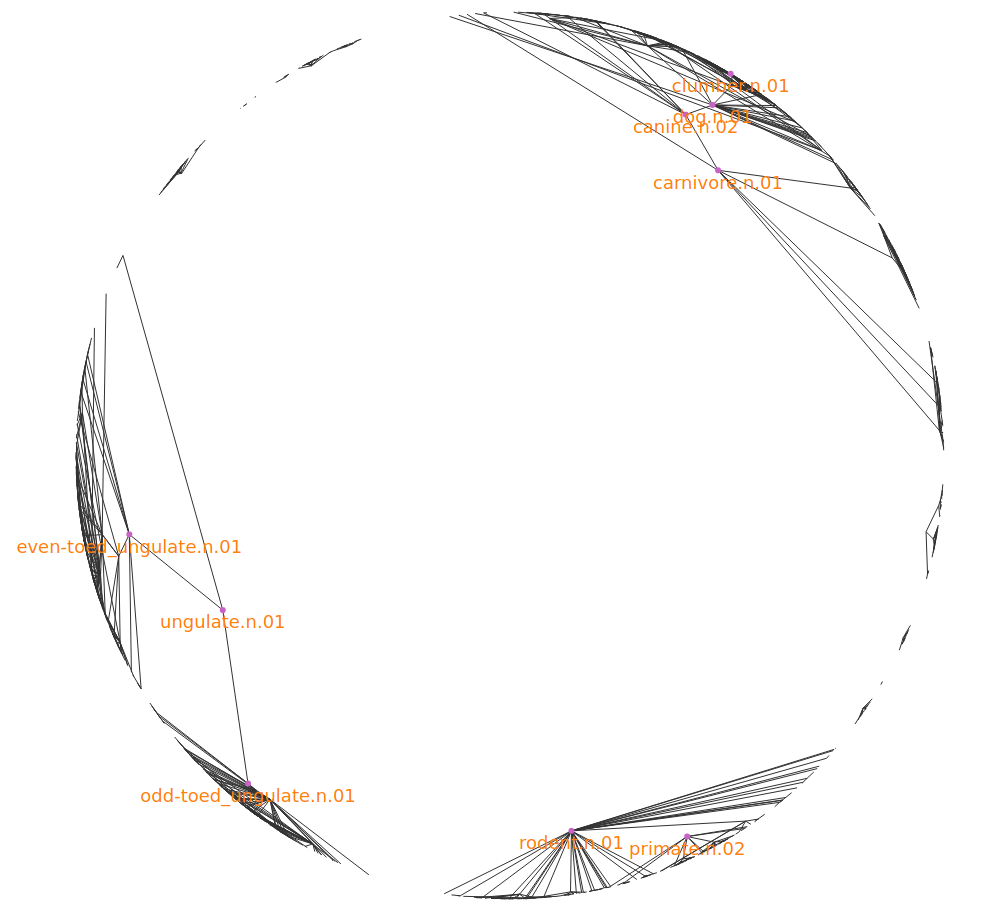}
  \includegraphics[width=0.22\linewidth, height=0.22\linewidth]{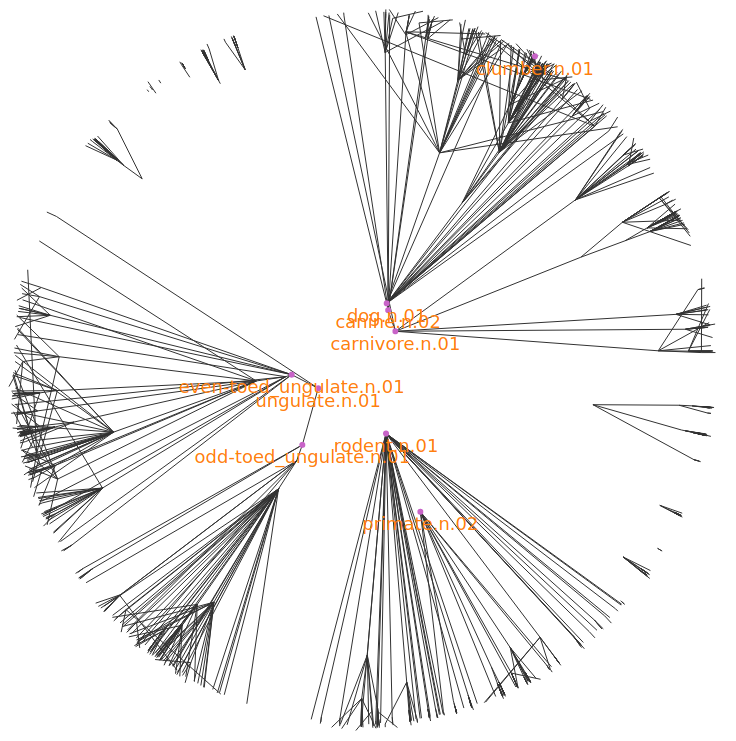}
\caption{Two dimensional embeddings of two datasets: a toy uniform tree of depth 7 and branching factor 3, with root removed (left); the mammal subtree of WordNet with 4230 relations, 1165 nodes and top 2 nodes removed (right).  \citep{nickel2017poincar} (each left side) has most of the nodes and edges collapsed on the space border, while our hyperbolic cones (each right side) nicely reveal the data structure.}
\label{fig:mammals}
\end{figure*}

\section{Learning with entailment cones}\label{sec:learning}
We now describe how embedding learning is performed.

\subsection{Max-margin training on angles}

We learn hierarchical word embeddings from a dataset $\mathcal{X}$ of entailment relations $(u,v)\in\mathcal{X}$, also called hypernym links, defining that $u$ entails $v$, or, equivalently, that $v$ is a subconcept of $u$\footnote{We prefer this notation over the one in \citep{nickel2017poincar}}. 

We choose to model the embedding entailment relation $(u,v)$ as $v$ belonging to the entailment cone $\frak S^{\psi(u)}_u$. 

Our model is trained with a max-margin loss function similar to the one in \citep{vendrov2015order}:
\begin{align}
\mathcal{L}=\sum_{(u,v)\in P} E(u,v)+\sum_{(u',v')\in N}\max(0,\gamma-E(u',v')),
\end{align}
for some margin $\gamma>0$, where $P$ and $N$ define samples of positive and negative edges respectively. The energy $E(u,v)$ measures the penalty of a wrongly classified pair $(u,v)$, which in our case measures how far is point $v$ from belonging to $\frak S^{\psi(u)}_u$ expressed as the smallest angle of a rotation of center $u$ bringing $v$ into $\frak S^{\psi(u)}_u$:
\begin{align}
E(u,v):=\max(0,\Xi(u,v)-\psi(u)),
\label{eq:energy_hyp}
\end{align}
where $\Xi(u,v)$ is defined in Eqs.~\ref{eq:angle_oxy} and~\ref{eq:psi_eucl}. Note that \citep{vendrov2015order} use $\Vert\max(0,v-u)\Vert^2$. This loss function encourages positive samples to satisfy $E(u,v)=0$ and negative ones to satisfy $E(u,v) \geq \gamma$. The same loss is used both in the hyperbolic and Euclidean cases.

\subsection{Full Riemannian optimization}
\label{sec:riem_opt}
As the parameters of the model live in the hyperbolic space, the back-propagated gradient is a Riemannian gradient. Indeed, if $u$ is in the Poincar\'e ball, and if we compute the usual (Euclidean) gradient $\nabla_u\mathcal{L}$ of our loss, then 
\begin{align}
u\leftarrow u-\eta\nabla_u\mathcal{L}
\end{align} 
makes no sense as an operation in the Poincar\'e ball, since the substraction operation is not defined in this manifold. Instead, one should compute the Riemannian gradient $\nabla^R_u\mathcal{L}$ indicating a direction in the tangent space $T_u\D^n$, and should move $u$ along the corresponding geodesic in $\D^n$~\citep{bonnabel2013stochastic}:
\begin{align}
u\leftarrow\exp_u(-\eta\nabla^R_u\mathcal{L}),
\end{align}
where the Riemannian gradient is obtained by rescaling the Euclidean gradient by the inverse of the metric tensor. As our metric is conformal, \textit{i.e.} $g^\D=\lambda^2 g^E$ where $g^E=\I$ is the Euclidean metric (see Eq \ref{eq:metric_tensor}), this leads to a simple formulation
\begin{align}
\nabla^R_u\mathcal{L}=(1/\lambda_u)^2\nabla_u\mathcal{L}.
\end{align}
Previous work \citep{nickel2017poincar} optimizing word embeddings in the Poincar\'e ball used the retraction map $\mathcal{R}_x(v):=x+v$ as a first order approximation of $\exp_x(v)$. Note that since we derived a closed-form expression of the exponential map in the Poincar\'e ball (Corollary~\ref{cor:exp_map}), we are able to perform full Riemannian optimization in this model of the hyperbolic space.

\section{Experiments}\label{sec:exp}

\begin{table*}[t]
\begin{center}
\begin{small}
\begin{sc}
\begin{tabular}{| c || c | c | c | c || c | c | c | c |}
\toprule
                      &         \multicolumn{4}{c||}{Embedding Dimension = 5}  & \multicolumn{4}{c|}{Embedding Dimension = 10} \\  \hline
                      &         \multicolumn{8}{c|}{Percentage of Transitive Closure (Non-basic) Edges in Training }  \\ 

                      &   0\%  &  10\%  &   25\% &   50\% &    0\% &   10\% &   25\% &   50\% \\
\midrule
Simple Euclidean Emb  & 26.8\% & 71.3\% & 73.8\% & 72.8\% & 29.4\% & 75.4\% & 78.4\% & 78.1\% \\ \hline
Poincar\'e Emb        & 29.4\% & 70.2\% & 78.2\% & 83.6\% & 28.9\% & 71.4\% & 82.0\% & 85.3\% \\ \hline
Order Emb             & \textbf{34.4}\% & 70.2\% & 75.9\% & 81.7\% & \textbf{43.0}\% & 69.7\% & 79.4\% & 84.1\% \\ \hline
Our Euclidean Cones   & 28.5\% & 69.7\% & 75.0\% & 77.4\% & 31.3\% & 81.5\% & 84.5\% & 81.6\% \\ \hline
Our Hyperbolic Cones  & 29.2\% & \textbf{80.1}\% & \textbf{86.0}\% & \textbf{92.8}\% & 32.2\% & \textbf{85.9}\% & \textbf{91.0}\% & \textbf{94.4}\% \\ \hline

\end{tabular}
\end{sc}
\end{small}
\end{center}
\vskip -0.1in
\caption{Test F1 results for various models. Simple Euclidean Emb and Poincar\'e Emb are the Euclidean and hyperbolic methods proposed by~\citep{nickel2017poincar}, Order Emb is proposed by~\citep{vendrov2015order}. }
\label{tab:results}
\end{table*}

We evaluate the representational and generalization power of hyperbolic entailment cones and of other baselines using data that exhibits a latent hierarchical structure. We follow previous work \citep{nickel2017poincar,vendrov2015order} and use the full transitive closure of the WordNet noun hierarchy~\citep{miller1990introduction}. Our binary classification task is link prediction for unseen edges in this directed acyclic graph.

\paragraph{Dataset splitting. Train and evaluation settings. }
We remove the tree root since it carries little information and only has trivial edges to predict. Note that this implies that we co-embed the resulting subgraphs together to prevent overlapping embeddings (see smaller examples in Figure~\ref{fig:mammals}). The remaining WordNet dataset contains 82,114 nodes and 661,127 edges in the full transitive closure. We split it into train - validation - test sets as follows. We first compute the transitive reduction\footnote{\url{https://en.wikipedia.org/wiki/Transitive_reduction}} of this directed acyclic graph, \textit{i.e.} \textit{``basic" edges} that form the minimal edge set for which the original transitive closure can be fully recovered. These edges are hard to predict, so we will always include them in the training set. The remaining \textit{``non-basic" edges} (578,477) are split into validation (5\%), test (5\%) and train (fraction of the rest).

 We augment both the validation and the test parts with sets of negative pairs as follows: for each true (positive) edge $(u,v)$, we randomly sample five $(u',v)$ and five $(u,v')$ negative corrupted pairs that are not edges in the full transitive closure. These are then added to the respective negative set. Thus, ten times as many negative pairs as positive pairs are used. They are used to compute standard classification metrics associated with these datasets: precision, recall, F1. For the training set, negative pairs are dynamically generated as explained below.

We make the task harder in order to understand the generalization ability of various models when differing amounts of transitive closure edges are available during training. We generate four training sets that include 0\%, 10\%, 25\%, or 50\% of the non-basic edges, selected randomly. We then train separate models using each of these four sets after being augmented with the basic edges.

\paragraph{Baselines.}
We compare against the strong hierarchical embedding methods of \textit{Order embeddings}~\cite{vendrov2015order} and \textit{Poincar\'e embeddings}~\citep{nickel2017poincar}. Additionally, we also use \textit{Simple Euclidean embeddings}, \textit{i.e.} the Euclidean version of the method presented in~\citep{nickel2017poincar} (one of their baselines). We note that Poincar\'e and Simple Euclidean embeddings were trained using a symmetric distance function, and thus cannot be directly used to evaluate asymmetric entailment relations. Thus, for these baselines we use the heuristic scoring function proposed in~\citep{nickel2017poincar}:
\begin{align}
score(u,v) = (1 + \alpha (\|u\| - \|v\|)) d(u,v)
\end{align}
and tune the parameter $\alpha$ on the validation set. For all the other methods (our proposed cones and order embeddings), we use the energy penalty $E(u,v)$, e.g. Eq.~\ref{eq:energy_hyp} for hyperbolic cones. This scoring function is then used at test time for binary classification as follows: if it is lower than a threshold, we predict an edge; otherwise, we predict a non-edge. The optimal threshold is chosen to achieve maximum F1 on the validation set by passing over the sorted  array of scores of positive and negative validation pairs.

\paragraph{Training details.} 

For all methods except Order embeddings, we observe that initialization is very important. Being able to properly disentangle embeddings from different subparts of the graph in the initial learning stage is essential in order to train qualitative models. We conjecture that initialization is hard because these models are trained to minimize highly non-convex loss functions. In practice, we obtain our best results when initializing the embeddings corresponding to the hyperbolic cones using the Poincar\'e embeddings pre-trained for 100 epochs. The embeddings for the Euclidean cones are initialized using Simple Euclidean embeddings pre-trained also for 100 epochs. For the Simple Euclidean embeddings and Poincar\'e embeddings, we find the burn-in strategy of~\citep{nickel2017poincar} to be essential for a good initial disentanglement. We also observe that the Poincar\'e embeddings are heavily collapsed to the unit ball border (as also pictured in Fig.~\ref{fig:mammals}) and so we rescale them by a factor of 0.7 before starting the training of the hyperbolic cones. 

Each model is trained for 200 epochs after the initialization stage, except for order embeddings which were trained for 500 epochs.  During training, 10 negative edges are generated per positive edge by randomly corrupting one of its end points. We use batch size of 10 for all models. For both cone models we use a margin of $\gamma=0.01$.


All Euclidean models and baselines are trained using stochastic gradient descent. For the hyperbolic models, we do not find significant empirical improvements when using full Riemannian optimization instead of approximating it with a retraction map as done in ~\citep{nickel2017poincar}. We thus use the retraction approximation since it is faster. For the cone models, we always project outside of the $\epsilon$ ball centered on the origin during learning as constrained by Eq.~\ref{eq:cones_closed_form} and its Euclidean version. For both we use $\epsilon=0.1$. A learning rate of 1e-4 is used for both Euclidean and hyperbolic cone models.

\paragraph{Results and discussion.}

Table~\ref{tab:results} shows the obtained results. For a fair comparison, we use models with the same number of dimensions. We focus on the low dimensional setting (5 and 10 dimensions) which is more informative. It can be seen that our hyperbolic cones are better than all the baselines in all settings, except in the $0\%$ setting for which order embeddings are better. However, once a small percentage of the transitive closure edges becomes available during training, we observe significant improvements of our method, sometimes by more than $8\%$ F1 score. Moreover, hyperbolic cones have the largest growth when transitive closure edges are added at train time. We further note that, while mathematically not justified\footnote{Indeed, mathematically, hyperbolic embeddings cannot be considered as Euclidean points.}, if embeddings of our proposed Euclidean cones model are initialized with the Poincar\'e embeddings instead of the Simple Euclidean ones, then they perform on par with the hyperbolic cones.

%
%
%
%

\section{Conclusion}
Learning meaningful graph embeddings is relevant for many important applications. Hyperbolic geometry has proven to be powerful for embedding hierarchical structures. We here take one step forward and propose a novel model based on geodesically convex entailment cones and show its theoretical and practical benefits. We empirically discover that strong embedding methods can vary a lot with the percentage of the taxonomy observable during training and demonstrate that our proposed method benefits the most from increasing size of the training data. As future work, it would be interesting to understand if the proposed entailment cones can be used to embed more complex data such as sentences or images.

Our code is publicly available\footnote{\url{https://github.com/dalab/hyperbolic_cones}.}.

\section*{Acknowledgements}
 We would like to thank Maximilian Nickel, Colin Evans, Chris Waterson, Marius Pasca, Xiang Li and Vered Shwartz for helpful discussions about related work and evaluation settings. 

This research is funded by the Swiss National Science Foundation (SNSF) under grant agreement number 167176. Gary B\'ecigneul is also funded by the Max Planck ETH Center for Learning Systems.

\bibliographystyle{icml2018}
\bibliography{th}

\newpage
\clearpage
%

\appendix

\section{Geodesics in the Hyperboloid Model} \label{sec:hyperboloid}

The hyperboloid model is $(\mathbb H^n, \inp{\cdot}{\cdot}_1)$, where $\mathbb H^n := \{x \in \mathbb \Re^{n,1}: \inp{x}{x}_1 = -1,\ x_{0} > 0\}$. The hyperboloid model can be viewed from the extrinsically as embedded in the pseudo-Riemannian manifold Minkowski space $(\Re^{n,1}, \inp{\cdot}{\cdot}_1)$ and inducing its metric. The Minkowski metric tensor $g^{\Re^{n,1}}$ of signature $(n,1)$ has the components
\begin{align*}
g^{\Re^{n,1}} = 
 \begin{bmatrix}
   -1 & 0 & \ldots & 0 \\
   0 & 1 & \ldots & 0 \\
   0 & 0 & \ldots & 0 \\
   0 & 0 & \ldots & 1 \\
   \end{bmatrix}
\end{align*}
The associated inner-product is $\inp{x}{y}_1 := -x_0y_0 + \sum_{i=1}^n x_iy_i$. Note that the hyperboloid model is a Riemannian manifold because the quadratic form associated with $g^{\mathbb H}$ is positive definite.

In the extrinsic view, the tangent space at $\HH^n$ can be described as $T_x\HH^n = \{ v \in \Re^{n,1} : \inp{v}{x}_1 = 0 \}$. See~\citet{robbin2011introduction,parkkonenhyperbolic}. \\

Geodesics of $\HH^n$ are given by the following theorem (Eq (6.4.10) in \citet{robbin2011introduction}):
\begin{theorem}
\label{thm:geodesics_hyperboloid}
Let $x \in \HH^n$ and $v \in T_x\HH^n$ such that $\inp{v}{v} = 1$. The unique unit-speed geodesic $\phi_{x,v}: [0,1] \rightarrow \HH^n$ with $\phi_{x,v}(0) = x$ and $\dot{\phi}_{x,v}(0) = v$ is
\begin{align}
\phi_{x,v}(t) = x \cosh(t) + v \sinh(t).
\end{align}
\end{theorem}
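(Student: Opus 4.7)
The plan is to verify that $\phi_{x,v}(t) = x\cosh(t) + v\sinh(t)$ satisfies four properties that together pin down the unique unit-speed geodesic of $\HH^n$ with the given initial data: (i) the curve stays on $\HH^n$, (ii) it matches the prescribed position and velocity at $t=0$, (iii) it has unit speed throughout, and (iv) it satisfies the geodesic equation of the induced Levi-Civita connection. Uniqueness then follows from the standard existence-uniqueness theorem applied to the geodesic ODE with the specified initial data.

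First I would check membership in $\HH^n$: expanding $\inp{\phi_{x,v}(t)}{\phi_{x,v}(t)}_1$ by bilinearity and using $\inp{x}{x}_1 = -1$, $\inp{v}{v}_1 = 1$, together with $\inp{x}{v}_1 = 0$ (the latter holding because $v \in T_x\HH^n$, via the description of the tangent space given in the excerpt), the cross term vanishes and one is left with $-\cosh^2(t) + \sinh^2(t) = -1$, so $\phi_{x,v}(t)\in\HH^n$ for all $t$. The initial conditions $\phi_{x,v}(0)=x$ and $\dot{\phi}_{x,v}(0)=v$ are immediate after differentiating to obtain $\dot{\phi}_{x,v}(t) = x\sinh(t) + v\cosh(t)$. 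The unit-speed check is analogous: $\inp{\dot{\phi}_{x,v}(t)}{\dot{\phi}_{x,v}(t)}_1 = -\sinh^2(t) + \cosh^2(t) = 1$, also needing $x_0>0$ only insofar as it guarantees one stays on the correct sheet (which is already ensured by continuity from $\phi_{x,v}(0)=x$).

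The key step is the geodesic equation itself. Since $\HH^n$ is a submanifold of the ambient Minkowski space and carries the induced metric, the standard submanifold characterization is that a curve on $\HH^n$ is a geodesic exactly when its ambient acceleration is $\inp{\cdot}{\cdot}_1$-orthogonal to the tangent space $T_{\phi_{x,v}(t)}\HH^n$. Using the description $T_y\HH^n = \{w : \inp{w}{y}_1=0\}$, the $\inp{\cdot}{\cdot}_1$-orthogonal complement at $y\in\HH^n$ is spanned by $y$, so the condition reduces to $\ddot{\phi}_{x,v}(t) \parallel \phi_{x,v}(t)$. Differentiating once more gives $\ddot{\phi}_{x,v}(t) = x\cosh(t)+v\sinh(t) = \phi_{x,v}(t)$, verifying the condition trivially.

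The main subtlety I would be prepared to defend is invoking the submanifold-geodesic criterion when the ambient space is only pseudo-Riemannian rather than Riemannian. The resolution is that the form induced on $T_x\HH^n$ is in fact positive definite (this is precisely what makes $\HH^n$ a genuine Riemannian manifold, as remarked in the hyperboloid section), and the decomposition of ambient acceleration into tangential and normal components with respect to the non-degenerate form $\inp{\cdot}{\cdot}_1$ goes through exactly as in the classical Euclidean-submanifold setting, so the derivation of the geodesic equation via "tangential acceleration equals covariant acceleration" transfers without modification.
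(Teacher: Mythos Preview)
Your proof is correct. The paper does not actually supply its own proof of this theorem: it simply records the formula and cites it as Eq.~(6.4.10) of Robbin--Salamon, \emph{Introduction to Differential Geometry}. Your direct verification---checking that the curve lies on $\HH^n$, has the right initial data, has unit speed, and that its ambient second derivative is proportional to the position vector (hence normal to $\HH^n$)---is exactly the standard argument one finds in such references, so in effect you have reproduced the proof that the paper outsources. The remark about the pseudo-Riemannian ambient space is apt: because the induced form on $T_x\HH^n$ is positive definite and the ambient form is non-degenerate, the tangential/normal splitting and the identification of the induced connection with the tangential projection of the flat ambient connection carry over verbatim, which is all you need.
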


\section{Proof of Theorem~\ref{thm:geodesic_unit_speed}} \label{sec:appendix_thm1_proof}
\begin{proof}
From theorem \ref{thm:geodesics_hyperboloid}, appendix~\ref{sec:hyperboloid}, we know the expression of the unit-speed geodesics of the hyperboloid model $\HH^n$. We can use the Egregium theorem to project the geodesics of $\HH^n$ to the geodesics of $\D^n$. We can do that because we know an isometry $\psi : \D^n \rightarrow \HH^n$ between the two spaces:
\begin{align}
\psi(x) := (\lambda_x - 1, \lambda_x x), \quad \psi^{-1}(x_0, x') = \frac{x'}{1 + x_0}
\end{align}

Formally, let $x \in \D^n, v \in T_x\D^n$ with $g^\D(v,v) = 1$. Also, let $\gamma : [0,1] \rightarrow \D^n$ be the unique unit-speed geodesic in $\D^n$ with $\gamma(0) = x$ and $\dot{\gamma}(0) = v$. Then, by Egregium theorem, $\phi := \psi \circ \gamma$ is also a unit-speed geodesic in $\HH^n$. From theorem \ref{thm:geodesics_hyperboloid}, we have that $\phi(t) = x' \cosh(t) + v' \sinh(t)$, for some $x' \in \HH^n, v' \in T_{x'}\HH^n$. One derives their expression:
\begin{align}
& x' = \psi \circ \gamma (0) = (\lambda_x - 1, \lambda_x x) \\
& v' = \dot{\phi}(0) = \left. \frac{\partial \psi (y_0, y)}{\partial y}\right|_{\gamma(0)}  \dot{\gamma}(0) = 
\begin{bmatrix}
   \lambda_x^2 \inp{x}{v} \\
   \lambda_x^2 \inp{x}{v} x + \lambda_x v
   \end{bmatrix} \nonumber
\end{align}

Inverting once again, $\gamma(t) = \psi^{-1} \circ \phi (t)$, one gets the closed-form expression for $\gamma$ stated in the theorem. \end{proof}

One can sanity check that indeed the formula from theorem \ref{thm:geodesic_unit_speed} satisfies the conditions:
\begin{itemize}
\item $d_\D(\gamma(0), \gamma(t)) = t, \quad \forall t \in [0,1]$
\item $\gamma(0) = x$
\item $\dot{\gamma}(0) = v$
\item $\lim_{t \rightarrow \infty} \gamma(t) := \gamma(\infty) \in \partial \D^n$
\end{itemize}

\section{Proof of Corollary~\ref{cor:exp_map}}
\label{proof:exp_map}
\begin{proof}
Denote $u = \frac{1}{\sqrt{g_x^\D(v,v)}} v$. Using the notations from Thm.~\ref{thm:geodesic_unit_speed}, one has $\exp_x(v) = \gamma_{x, u}(\sqrt{g_x^\D(v,v)})$. Using Eq.~\ref{eq:metric_tensor} and~\ref{eq:geodesic_unit_speed}, one derives the result.
\end{proof}

\section{Proof of Corollary~\ref{cor:coplanar}}
\label{proof:coplanar}
\begin{proof}
For any geodesic $\gamma_{x,v}(t)$, consider the plane spanned by the vectors $x$ and $v$. Then, from  Thm.~\ref{thm:geodesic_unit_speed}, this plane contains all the points of $\gamma_{x,v}(t)$,  i.e.
\begin{align}
\{\gamma_{x,v}(t) : t \in \Re \} \subseteq \{ax + bv : a,b \in \Re \}
\end{align}
\end{proof}

\section{Proof of Lemma~\ref{lemma:acute_angle}} \label{proof:lemma_acute_angle}
\begin{proof}
Assume the contrary and let $x \in \D^n\setminus \{0\}$ s.t. $\psi(\|x\|) > \frac{\pi}{2}$. We will show that transitivity implies that
\begin{align}
\forall x' \in \partial \frak S^{\psi(x)}_x: \quad \psi(\|x'\|) \leq \frac{\pi}{2}
\label{eq:remaining_fact_lemma_acute}
\end{align}
If the above is true, by moving $x'$ on any arbitrary (continuous) curve on the cone border $\partial \frak S^{\psi(x)}_x$ that ends in $x$, one will get a contradiction due to the continuity of $\psi(\|\cdot\|)$.

We now prove the remaining fact, namely Eq.~\ref{eq:remaining_fact_lemma_acute}. Let any arbitrary $x' \in \partial \frak S^{\psi(x)}_x$. Also, let $y \in \partial \frak S^{\psi(x)}_x$ be any arbitrary point on the geodesic half-line connecting $x$ with $x'$ starting from $x'$ (i.e. excluding the segment from $x$ to $x'$). Moreover, let $z$ be any arbitrary point on the spoke through $x'$ radiating from $x'$, namely $z \in A_{x'}$ (notation from Eq.~\ref{eq:spoke}). Then, based on the properties of hyperbolic angles discussed before (based on Eq.~\ref{eq:angle-geodesics}), the angles $\angle yx'z$ and $\angle zx'x$ are well-defined. 

\begin{figure}[H]
  \center
    \includegraphics[width=0.3\textwidth]{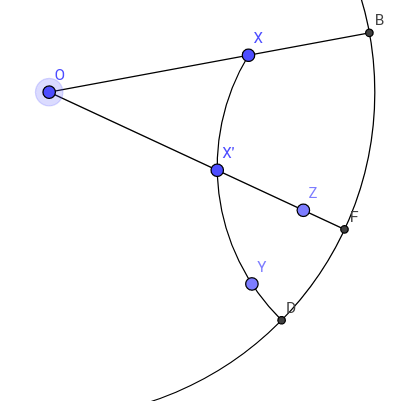}
\end{figure}

From Cor.~\ref{cor:coplanar} we know that the points $O, x, x', y, z$ are coplanar. We denote this plane by $\mathcal{P}$. Furthermore, the metric of the Poincar\'e ball is conformal with the Euclidean metric. Given these two facts, we derive that
\begin{align}
\angle yx'z + \angle zx'x = \angle(yx'x) = \pi
\end{align}
thus
\begin{align}
\min(\angle yx'z, \angle zx'x) \leq \frac{\pi}{2}
\end{align}

It only remains to prove that 
\begin{align}
\angle yx'z \geq \psi(x') \quad \& \quad \angle zx'x \geq \psi(x')
\label{eq:inequality_border}
\end{align}

Indeed, assume w.l.o.g. that $\angle yx'z < \psi(x')$. Since $\angle yx'z < \psi(x')$, there exists a point $t$ in the plane $\mathcal{P}$ such that 
\begin{align}
\angle Oxt < \angle Oxy \quad \& \quad \psi(x') \geq \angle tx'z > \angle yx'z
\end{align}
Then, clearly, $t \in \frak S^{\psi(x')}_{x'}$, and also $t \notin \frak S^{\psi(x)}_x$, which contradicts the transitivity property (Eq.~\ref{eq:cone_transitivity}).
\end{proof}

\section{Proof of Theorem~\ref{thm:sin_norm_product}} \label{proof:sin_norm_product}
\begin{proof}
We first need to prove the following fact:

\begin{lemma}
\label{lemma:helper}
Transitivity implies that for all $x\in\D^n\setminus \{0\}$, $\forall x' \in \partial \frak S^{\psi(x)}_x$:
\begin{align}
\sin(\psi(\|x'\|)) \sinh(\|x'\|_{\D}) \leq \sin(\psi(\|x\|)) \sinh(\|x\|_{\D}).
\end{align}
\end{lemma}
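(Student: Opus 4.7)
The plan is to combine the hyperbolic law of sines in the triangle $Oxx'$ with an angular constraint extracted from transitivity.

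First, by Corollary~\ref{cor:coplanar}, each of the geodesics $Ox$, $Ox'$, $xx'$ lies in some plane through $O$; all three must therefore lie in the unique totally geodesic $2$-plane spanned by $O$, $x$, $x'$, in which I will work throughout (exactly as in the proof of Lemma~\ref{lemma:acute_angle}). The hypothesis $x' \in \partial \frak S^{\psi(x)}_x$ says the initial tangent at $x$ of the geodesic $xx'$ makes angle $\psi(x)$ with $\bar x$, while the initial tangent at $x$ of the geodesic $xO$ is $-\bar x/\|\bar x\|$; by conformality of $g^{\D}$, the angle at vertex $x$ of triangle $Oxx'$ is then $\angle Oxx' = \pi - \psi(x)$. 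Setting $\theta := \angle Ox'x$ and using $d_{\D}(O,x) = \|x\|_{\D}$, $d_{\D}(O,x') = \|x'\|_{\D}$, the hyperbolic sine law~\eqref{eq:sine_law} applied to this triangle yields
\begin{equation*}
\sin(\psi(x))\,\sinh(\|x\|_{\D}) \;=\; \sin(\theta)\,\sinh(\|x'\|_{\D}).
\end{equation*}

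Next I claim that transitivity forces $\theta \geq \psi(x')$. Locally at $x'$, $\partial \frak S^{\psi(x)}_x$ is the geodesic $xx'$ continued past $x'$; its tangent at $x'$ is opposite to the tangent of $x'x$ at $x'$ and, combined with the fact that the tangent at $x'$ of $x'O$ is $-\bar x'/\|\bar x'\|$ together with conformality, this extension tangent makes angle $\theta$ with $\bar x'$ on the side of $\bar x'$ opposite to $x$. The line spanned by this extension tangent divides $T_{x'}\D^n$ (restricted to the $2$-plane) into two half-planes, precisely one of which is the local ``interior side'' of $\frak S^{\psi(x)}_x$ (identifiable by tracking the other boundary ray of the cone at $x$). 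The tangent cone $S^{\psi(x')}_{x'}$ is the closed wedge of vectors whose angle with $\bar x'$ is at most $\psi(x')$; if $\psi(x') > \theta$, then $S^{\psi(x')}_{x'}$ would contain a direction strictly on the exterior side of this tangent line, and following a short geodesic from $x'$ in that direction---using that $\exp_{x'}$ is a local diffeomorphism---produces a point lying in $\frak S^{\psi(x')}_{x'}$ but not in $\frak S^{\psi(x)}_x$, contradicting~\eqref{eq:cone_transitivity}.

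Finally, the Gauss-Bonnet inequality for a hyperbolic triangle gives $\angle Oxx' + \theta + \angle xOx' < \pi$, so $\theta < \psi(x) \leq \pi/2$ by Lemma~\ref{lemma:acute_angle}. Hence $\theta$ and $\psi(x')$ both lie in $[0,\pi/2]$ where $\sin$ is monotone, so $\theta \geq \psi(x')$ implies $\sin(\theta) \geq \sin(\psi(x'))$; substituting into the sine-law identity yields the claimed inequality. The delicate step is the middle one: one must correctly identify which of the two half-planes at $x'$ cut out by the tangent to $\partial\frak S^{\psi(x)}_x$ is the interior side, and then make the perturbation argument precise via the exponential map.
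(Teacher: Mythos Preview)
Your proposal is correct and follows essentially the same route as the paper: apply the hyperbolic sine law in the triangle $Oxx'$ using $\angle Oxx' = \pi - \psi(x)$, then extract the inequality $\angle Ox'x \ge \psi(x')$ from transitivity via a perturbation at $x'$, and finally use that $\angle Ox'x \le \pi/2$ together with the monotonicity of $\sin$ on $[0,\pi/2]$.

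The only noteworthy difference is how you obtain $\theta \le \pi/2$. The paper argues via vertical angles: with $y$ on the continuation of the geodesic $xx'$ past $x'$ and $z\in A_{x'}$, one has $\angle Ox'x = \angle yx'z$ and $\angle yx'z + \angle zx'x = \pi$, and since \emph{both} angles are shown to be $\ge \psi(x')$ one may assume without loss of generality that $\angle yx'z\le \pi/2$ (the other case being handled by $\sin$-symmetry). You instead invoke the hyperbolic angle-sum defect $\angle Oxx' + \theta + \angle xOx' < \pi$, which together with $\angle Oxx' = \pi - \psi(x)$ and Lemma~\ref{lemma:acute_angle} gives $\theta < \psi(x) \le \pi/2$ directly. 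This is a clean shortcut and spares you from proving the second half of the inequality~\eqref{eq:inequality_border}. The perturbation step you flag as ``delicate'' is exactly the step the paper also treats somewhat informally in the proof of Lemma~\ref{lemma:acute_angle}; your description of it (identifying the interior half-plane and using that $\exp_{x'}$ is a local diffeomorphism) is adequate at the same level of rigor.
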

\begin{proof}
We will use the exact same figure and notations of points $y,z$ as in the proof of lemma~\ref{lemma:acute_angle}. In addition, we assume w.l.o.g that
\begin{align}
\angle yx'z \leq \frac{\pi}{2}
\label{eq:sin_norm_0}
\end{align}
Further, let $b \in \partial \D^n$ be the intersection of the spoke through $x$ with the border of $\D^n$. Following the same argument as in the proof of lemma~\ref{lemma:acute_angle}, one proves Eq.~\ref{eq:inequality_border} which gives:
\begin{align}
\angle yx'z \geq \psi(x')
\label{eq:sin_norm_1}
\end{align}
In addition, the angle at $x'$ between the geodesics $xy$ and $Oz$ can be written in two ways:
\begin{align}
\angle Ox'x = \angle yx'z
\label{eq:sin_norm_2}
\end{align}
Since $ x' \in \partial \frak S^{\psi(x)}_x$, one proves
\begin{align}
\angle Oxx' = \pi - \angle x'xb = \pi - \psi(x)
\label{eq:sin_norm_3}
\end{align}
We apply hyperbolic law of sines (Eq.~\ref{eq:sine_law}) in the hyperbolic triangle $Oxx'$:
\begin{align}
\frac{\sin(\angle Oxx')}{\sinh(d_\D(O,x'))} = \frac{\sin(\angle Ox'x)}{\sinh(d_\D(O,x))}
\label{eq:sin_norm_4}
\end{align}
Putting together Eqs.~\ref{eq:sin_norm_0},\ref{eq:sin_norm_1},\ref{eq:sin_norm_2},\ref{eq:sin_norm_3},\ref{eq:sin_norm_4}, and using the fact that $\sin(\cdot)$ is an increasing function on $[0, \frac{\pi}{2}]$, we derive the conclusion of this helper lemma.
\end{proof} 

\vspace{0.5cm}
We now return to the proof of our theorem. Consider any arbitrary $r, r' \in (0,1) \cap \text{Dom}(\psi)$ with $r < r'$. Then, we claim that is enough to prove that 
\begin{align}
\exists x \in \D^n, x' \in \partial \frak S^{\psi(x)}_x \quad s.t. \quad \|x\| = r , \|x'\| = r'
\label{remaining_fact_lemma3}
\end{align} 
Indeed, if the above is true, then one can use the fact~\ref{eq:poincare_norm}, i.e.
\begin{align}
\sinh(\|x\|_{\D}) = \sinh\left(\ln\left(\frac{1 + r}{1-r} \right)\right) = \frac{2r}{1-r^2}
\end{align}
and apply lemma~\ref{lemma:helper} to derive
\begin{align}
h(r') \leq h(r)
\end{align}
which is enough for proving the non-increasing property of function $h$. \\

We are only left to prove the fact~\ref{remaining_fact_lemma3}. Let any arbitrary $x \in \D^n$ s.t. $\|x\| = r $. Also, consider any arbitrary geodesic $\gamma_{x,v} : \R_+ \rightarrow \partial \frak S^{\psi(x)}_x$ that takes values on the cone border, i.e. $\angle(v,x) = \psi(x)$. We know that 
\begin{align}
\|\gamma_{x,v}(0)\| = \|x\| = r
\end{align}
and that this geodesic "ends" on the ball's border $\partial \D^n$, i.e.
\begin{align}
\|\lim_{t \rightarrow \infty}\gamma_{x,v}(t)\| = 1
\end{align}
Thus, because the  function $\|\gamma_{x,v}( \cdot )\|$ is continuous, we obtain that for any $r' \in (r,1)$ there exists an $t' \in \R_+$ s.t. $\|\gamma_{x,v}( t' )\| = r'$. By setting $x' := \gamma_{x,v}( t' ) \in \partial \frak S^{\psi(x)}_x$ we obtain the desired result.
\end{proof}

\section{Proof of Theorem~\ref{thm:equiv_def}}
\label{proof:equiv_def}
\begin{proof}
For any $y \in\frak S^{\psi(x)}_x$, the axial symmetry property implies that $\pi - \angle Oxy \leq \psi(x)$. Applying the hyperbolic cosine law in the triangle $Oxy$ and writing the above angle inequality in terms of the cosines of the two angles, one gets
\begin{align}
\cos \angle Oxy = \frac{- \cosh(\|y\|_\D) + \cosh(\|x\|_\D) \cosh(d_\D(x,y))}{\sinh(\|x\|_\D) \sinh(d_\D(x,y))} 
\end{align}
Eq.~\ref{eq:angle_oxy} is then derived from the above by an algebraic reformulation.
\end{proof}

\end{document}